\pdfoutput=1

\documentclass[11pt]{article}

\usepackage{ACL2023}
\usepackage{custom}
\newcommand{\mymacro}[1]{{#1}}


%

\newcommand{\defn}[1]{\textbf{#1}}



\newcommand{\veta}{{\mymacro{\boldsymbol{\eta}}}}

\newcommand{\outBias}{{\mymacro{\vu}}}

\newcommand{\precision}{{\mymacro{\psi}}}
\newcommand{\precisionFun}[2]{{\precision_{#2}\left(#1\right)}}

\newcommand{\pdens}{{\mymacro{ p}}}

\newcommand{\qdens}{{\mymacro{ q}}}



\newcommand{\ind}[1]{\mathbbm{1} \left\{ #1 \right\}}
\DeclarePairedDelimiter\ceil{\lceil}{\rceil}

\newcommand{\R}{{\mymacro{ \mathbb{R}}}}

\newcommand{\Z}{{\mymacro{ \mathbb{Z}}}}

\newcommand{\Rplus}{{\mymacro{ \mathbb{R}_{+}}}}

\newcommand{\func}{{\mymacro{ f}}}

\newcommand{\funcs}{{\mymacro{ s}}}
\newcommand{\funcc}{{\mymacro{ c}}}

\newcommand{\abs}[1]{{\mymacro{ \left| #1 \right|}}}

\newcommand{\alphabet}{{\mymacro{ \Sigma}}}

\newcommand{\stackalphabet}{{\mymacro{ \Gamma}}}
\newcommand{\iotaStackAlphabet}{{\mymacro{\stackalphabet_\placeholderSym}}}

\newcommand{\eosalphabet}{{\mymacro{ \overline{\alphabet}}}}
\newcommand{\lang}{{\sL}}

\newcommand{\kleene}[1]{{\mymacro{#1^*}}}
\newcommand{\kleeneTo}[2]{{\mymacro{#1^{\leq #2}}}}
\newcommand{\kleeneTom}[1]{{\mymacro{\kleeneTo{\stackalphabet}{\stackBound}}}}

\newcommand{\str}{{\mymacro{\boldsymbol{y}}}}

\newcommand{\strlt}{{\mymacro{ \str_{<\tstep}}}}

\newcommand{\strlen}{{\mymacro{T}}}

\newcommand{\sym}{{\mymacro{y}}}
\newcommand{\eossym}{{\mymacro{\overline{\sym}}}}
\newcommand{\syma}{{\mymacro{a}}}
\newcommand{\symb}{{\mymacro{b}}}

\newcommand{\stacksym}{{\mymacro{\stacksymbol{\gamma}}}}

\newcommand{\defeq}{\mathrel{\stackrel{\textnormal{\tiny def}}{=}}}

\newcommand{\NTo}[1]{{\mymacro{\left[ #1 \right]}}}

\newcommand{\set}[1]{{\mymacro{\left\{ #1 \right\}}}}


\newcommand{\idx}{{\mymacro{ n}}}

\newcommand{\idxd}{{\mymacro{ d}}}
\newcommand{\idxi}{{\mymacro{ i}}}
\newcommand{\idxj}{{\mymacro{ j}}}

\newcommand{\nstates}{{\mymacro{ |\states|}}}
\newcommand{\nsymbols}{{\mymacro{ |\alphabet|}}}
\newcommand{\stacknsymbols}{{\mymacro{ |\stackalphabet|}}}
\newcommand{\iotastacknsymbols}{{\mymacro{ |\iotaStackAlphabet|}}}

\newcommand{\logStackN}{{\mymacro{G}}}
\newcommand{\eosnsymbols}{{\mymacro{ |\eosalphabet|}}}
\newcommand{\tstep}{{\mymacro{ t}}}


\newcommand{\pLM}{\mymacro{\pdens}}
\newcommand{\qLM}{\mymacro{\qdens}}
\newcommand{\pLNSM}{\mymacro{\pdens}}

\newcommand{\pLN}{\mymacro{\pdens}}

\newcommand{\eos}{{\mymacro{\textsc{eos}}}}
\newcommand{\ngr}{{\mymacro{ \textit{n}}}}
\newcommand{\ngram}{{\mymacro{ \textit{n}-gram}}\xspace}

\newcommand{\embedDim}{{\mymacro{ R}}}


\newcommand{\boundedDyck}[2]{{\mymacro{ \mathrm{D}\!\left(#1, #2\right)}}}
\newcommand{\nBracketTypes}{{\mymacro{b}}}
\newcommand{\dyckMaxDepth}{{\mymacro{n}}}
\newcommand{\boundedDyckkm}{\mymacro{\boundedDyck{\nBracketTypes}{\dyckMaxDepth}}\xspace}

\newcommand{\binEnc}{{\mymacro{\text{Bin}}}}
\newcommand{\binEncFun}[1]{{\mymacro{\binEnc\!\left(#1\right)}}}

\newcommand{\stackVec}{{\mymacro{\chi}}}
\newcommand{\stackVecFun}[1]{{\mymacro{ \stackVec \!\left( #1 \right)}}}

\newcommand{\kVariedf}{{\mymacro{\zeta}}}
\newcommand{\alphabetDeterminedf}{{\mymacro{\alpha}}}

\newcommand{\onehot}[1]{{\mymacro{ \llbracket#1\rrbracket}}}

\newcommand{\inEmbedding}{{\mymacro{ \vr}}}
\newcommand{\inEmbeddingFun}[2][]{{\mymacro{ \inEmbedding\!\left(#2\right)}}}

\newcommand{\inEmbedSymt}{{\mymacro{ \inEmbeddingFun{\sym_\tstep}}}}

\newcommand{\symt}{{\mymacro{ \sym_{\tstep}}}}

\newcommand{\bias}{{\mymacro{ \vb}}}

\newcommand{\biasVech}{{\mymacro{ \vb}}}






\newcommand{\zero}{{\mymacro{\mathbf{0}}}}
\newcommand{\one}{{\mymacro{\mathbf{1}}}}

\newcommand{\automaton}{{\mymacro{ \mathcal{A}}}}
\newcommand{\wfsa}{{\mymacro{ \automaton}}}

\newcommand{\stateq}{{\mymacro{ q}}}

\newcommand{\states}{{\mymacro{ Q}}}

\newcommand{\trans}{{\mymacro{ \delta}}}

\newcommand{\weight}{{\mymacro{ \textnormal{w}}}}

\newcommand{\apath}{{\mymacro{ \boldsymbol \pi}}}
\newcommand{\pathlen}{{\mymacro{ N}}}
\newcommand{\paths}{{\mymacro{ \Pi}}}

\newcommand{\initf}{{\mymacro{ \lambda}}}
\newcommand{\finalf}{{\mymacro{ \rho}}}
\newcommand{\initfFun}[1]{{\mymacro{\initf\left(#1\right)}}}
\newcommand{\finalfFun}[1]{{\mymacro{\finalf\left(#1\right)}}}

\newcommand{\transitionWeight}{{\mymacro{ \omega}}}
\newcommand{\transitionWeightFun}[1]{{\mymacro{ \transitionWeight\left(#1\right)}}}
\newcommand{\nextStack}{{\mymacro{\phi}}}
\newcommand{\nextStackFun}[1]{{\mymacro{ \nextStack\left(#1\right)}}}
\newcommand{\strToStack}{{\mymacro{\varphi}}}
\newcommand{\strToStackFun}[1]{{\mymacro{ \strToStack\left(#1\right)}}}


\newcommand{\wfsatuple}{{\mymacro{ \left( \alphabet, \states, \trans, \initf, \finalf \right)}}}

\newcommand{\edge}[4]{{\mymacro{#1 \xrightarrow{#2 / #3} #4}}}

\newcommand{\yield}{{\mymacro{\textbf{s}}}}








\newcommand{\elmanrnntuple}{{\mymacro{ \left( \alphabet, \sigmoid, \hiddDim, \recMtx, \inMtx, \biasVech, \initstate\right)}}}

\newcommand{\rnn}{{\mymacro{ \mathcal{R}}}}

\newcommand{\pfsaAcr}{{\mymacro{PFSA}}\xspace}

\newcommand{\bpdaAcr}{{\mymacro{\text{BPDA}}}\xspace}
\newcommand{\recMtx}{{\mymacro{ \mU}}}
\newcommand{\inMtx}{{\mymacro{ \mV}}}
\newcommand{\outMtx}{{\mymacro{ \mE}}}

\newcommand{\hiddDim}{{\mymacro{ D}}}


\newcommand{\softmax}{{\mymacro{ \mathrm{softmax}}}}

\newcommand{\ReLU}{{\mymacro{ \mathrm{ReLU}}}}
\newcommand{\softmaxfunc}[2]{{\mymacro{ \mathrm{softmax}\!\left(#1\right)_{#2}}}} 

\newcommand{\heaviside}{{\mymacro{\text{H}}}}
\newcommand{\heavisideFun}[1]{{\mymacro{ \heaviside\left(#1\right)}}}
\newcommand{\sigmoid}{{\mymacro{ \sigma}}}
\newcommand{\sigmoidFun}[1]{{\mymacro{ \sigmoid\left(#1\right)}}}

\newcommand{\hiddState}{{\mymacro{ \vh}}}
\newcommand{\hiddStatet}{{\mymacro{ \hiddState_\tstep}}}

\newcommand{\hiddStatetminus}{{\mymacro{ \hiddState_{\tstep - 1}}}}

\newcommand{\vhzero}{{\mymacro{ \vh_0}}}

\newcommand{\initstate}{{\mymacro{\veta}}}
\newcommand{\hiddStateZero}{{\mymacro{ \vhzero}}}


\newcommand{\stackBound}{{\mymacro{m}}}

\newcommand{\stackMod}{{\mymacro{\mu}}}
\newcommand{\stackModFun}[1]{{\stackMod\left(#1\right)}}
\newcommand{\stackTop}{{\mymacro{\textcolor{ETHGreen}{\boldsymbol{\tau}}}}}
\newcommand{\stackTopSym}{{\mymacro{\textcolor{ETHGreen}{\tau}}}}

\newcommand{\thrPar}{{\mymacro{\alpha}}}
\newcommand{\popAlphabet}{{\mymacro{\alphabet_{\popOp}}}}
\newcommand{\pushAlphabet}{{\mymacro{\alphabet_{\pushOp}}}}











\newcommand{\negterm}[1]{{\mymacro{ {\raise.17ex\hbox{$\scriptstyle\sim$}} #1}}}

\newcommand{\ifcondition}{\textbf{if }}
\newcommand{\otherwisecondition}{\textbf{otherwise }}

\newcommand{\bpdatuple}{\left(\alphabet, \stackalphabet, \stackBound, \stackMod, \initf, \finalf\right)}


\newcommand{\pushdown}{\mymacro{ \mathcal{P}}}
\newcommand{\pda}{\mymacro{ \pushdown}}

\newcommand{\arun}{{\mymacro{ \apath}}}
\newcommand{\runs}{{\mymacro{\paths}}}
\newcommand{\runsFun}[2]{\runs\left(#2; #1\right)}
\newcommand{\stackseq}{{\mymacro{ {\boldsymbol{\gamma}}}}}
\newcommand{\stackstr}{{\mymacro{\stackseq}}}

\newcommand{\placeholderSym}{{\mymacro{\iota}}}

\newcommand{\stacksymbol}[1]{{\mymacro{ #1 }}}

\newcommand{\pushOp}{{\mymacro{ \texttt{PUSH}}}\xspace}
\newcommand{\pushOpFun}[1]{{\pushOp\left(#1\right)}}

\newcommand{\popOp}{{\mymacro{ \texttt{POP}}}\xspace}
\newcommand{\popOpFun}[1]{{\mymacro{\popOp\left(#1\right)}}}




\newcommand{\ignore}[1]{}
\newcommand{\expandLater}[1]{}






\def\1{\mathbf{1}}

\def\eps{{\mymacro{ \varepsilon}}}








\def\vb{{{\mymacro{ \mathbf{b}}}}}

\def\vh{{{\mymacro{ \mathbf{h}}}}}

\def\vr{{{\mymacro{ \mathbf{r}}}}}

\def\vu{{{\mymacro{ \mathbf{u}}}}}
\def\vv{{{\mymacro{ \mathbf{v}}}}}

\def\vx{{{\mymacro{ \mathbf{x}}}}}

\def\vz{{{\mymacro{ \mathbf{z}}}}}



\def\mE{{{\mymacro{ \mathbf{E}}}}}

\def\mM{{{\mymacro{ \mathbf{M}}}}}

\def\mU{{{\mymacro{ \mathbf{U}}}}}
\def\mV{{{\mymacro{ \mathbf{V}}}}}





\def\sJ{{{\mymacro{ \mathcal{J}}}}}

\def\sL{{{\mymacro{ \mathcal{L}}}}}

\def\sP{{{\mymacro{ \mathcal{P}}}}}
\def\sQ{{{\mymacro{ \mathcal{Q}}}}}








\newcommand{\N}{{\mymacro{ \mathbb{N}}}}





\newcommand{\bigO}{{\mymacro{\mathcal{O}}}}
\newcommand{\bigOFun}[1]{{\mymacro{\bigO\left(#1\right)}}}
\newcommand{\bigOmega}[1]{{\mymacro{ \Omega\left(#1\right)}}}


\DeclareMathSymbol{\mlq}{\mathord}{operators}{``} 
\DeclareMathSymbol{\mrq}{\mathord}{operators}{`'} 

%
\setlength\titlebox{3.4cm}
%

\title{On Efficiently Representing Regular Languages as RNNs}

\author{
Anej Svete%
~\;~\;~Robin Shing Moon Chan%
~\;~\;~Ryan Cotterell\\
\texttt{\{\href{asvete@inf.ethz.ch}{asvete}, \href{chanr@inf.ethz.ch}{chanr}, \href{ryan.cotterell@inf.ethz.ch}{ryan.cotterell}\}@inf.ethz.ch}\\
    {%
\setlength{\fboxsep}{2.5pt}%
\setlength{\fboxrule}{2.5pt}%
\fcolorbox{white}{white}{
    \includegraphics[width=.15\linewidth]{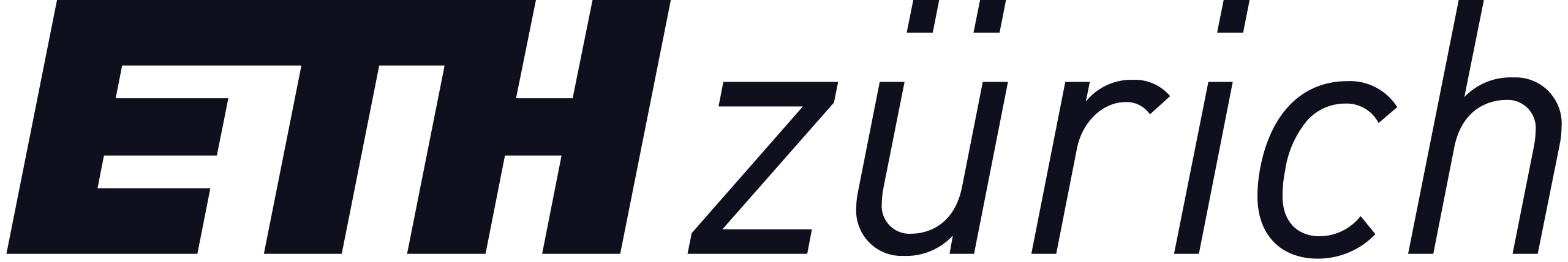}
}
}}

\begin{document}
\maketitle
\begin{abstract}
    Recent work by \citet{hewitt-etal-2020-rnns} provides an interpretation of the empirical success of recurrent neural networks (RNNs) as language models (LMs).
    It shows that RNNs can efficiently represent bounded hierarchical structures that are prevalent in human language.
    This suggests that RNNs' success might be linked to their ability to model hierarchy.
    However, a closer inspection of \citeposs{hewitt-etal-2020-rnns} construction shows that it is not inherently limited to hierarchical structures.
    This poses a natural question: What other classes of LMs can RNNs efficiently represent?
    To this end, we generalize \citeposs{hewitt-etal-2020-rnns} construction and show that RNNs can efficiently represent a larger class of LMs than previously claimed---specifically, those that can be represented by a pushdown automaton with a bounded stack and a specific stack update function.
    Altogether, the efficiency of representing this diverse class of LMs with RNN LMs suggests novel interpretations of their inductive bias.

    \vspace{0.5em}
    {\includegraphics[width=1.36em,height=1.25em]{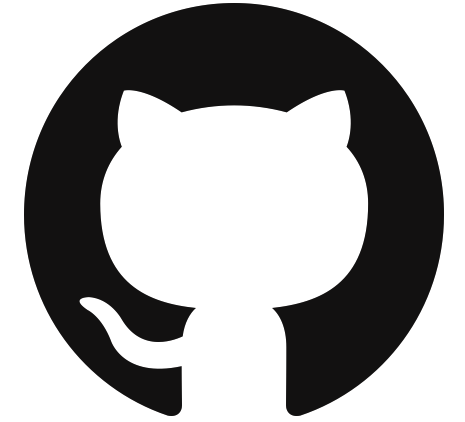}\hspace{6pt}\parbox{\dimexpr\linewidth-2\fboxsep-2\fboxrule}{\url{https://github.com/rycolab/bpdas}}}
\end{abstract}

\section{Introduction} \label{sec:intro}
Neural LMs have demonstrated a human-level grasp of grammar and linguistic nuance.
Yet, a considerable gap remains between our empirical observations and our theoretical understanding of their capabilities.
One approach to bridge this gap is to study what classes of formal languages neural LMs can efficiently represent.
The rationale behind this object of study is that a neural LM's ability to represent a language efficiently suggests the presence of an inductive bias in the model architecture that prefers that language over others, and may make it specifically easier to learn, e.g., due to Occam's razor.\looseness=-1 

Exploring LMs' ability to model formal languages has garnered significant interest in recent years; see, e.g., the surveys by \citet{MerrillBlackBox} and \citet{strobl2023transformers}.
Investigation into this area has a particularly long history in the context of RNNs \citep{McCulloch1943,Minsky1954,Siegelmann1992OnTC}.\footnote{In terms of empirical performance on statistical language modeling, RNNs constituted the empirical state of the art until recently \citep{qiu2020, orvieto2023resurrecting}, and have, despite the prominence of transformer-based LMs, seen a resurgence of late \citep{peng2023rwkv,orvieto2023resurrecting, zhou2023recurrentgpt}.}
For example, a classic result \citep{Minsky1954} states that RNNs are equivalent to finite-state automata (FSAs).
How \emph{efficiently} an RNN can encode an FSA was studied by \citet{Indyk95}, who showed that an FSA with states $\states$ over an alphabet $\alphabet$ can be simulated by an RNN with $\bigOFun{\nsymbols \sqrt{\nstates}}$ neurons.
This construction is optimal in the sense that there exist FSAs that \emph{require} this many neurons to be emulated by an RNN.\looseness=-1

\begin{figure}
    \centering

    \begin{tikzpicture}[
        tape node/.style={draw=ETHBlue!80,minimum size=0.85cm,fill=ETHBlue!20},
        attn arrow/.style={-{Latex[length=3mm,width=2mm]},ETHGreen!100},
        comb arrow/.style={-{Latex[length=3mm,width=2mm]},ETHRed!70},
        ]

        \foreach \i/\y in {0/$\sym_1$,1/$\sym_2$,2/$\cdots$,3/$\sym_{\tstep-4}$,4/$\sym_{\tstep-3}$,5/$\sym_{\tstep-2}$,6/$\sym_{\tstep-1}$,7/$\eossym_\tstep$,8/$\cdots$} {
                \ifnum \i=7
                    \node[tape node,fill=ETHBlue!40] (tape-\i) at (0.85*\i,0) {\footnotesize \y};
                \else
                    \node[tape node,fill=ETHBlue!20] (tape-\i) at (0.85*\i,0) {\footnotesize \y};
                    \ifnum \i>7
                        \node[tape node,fill=ETHBlue!10] (tape-\i) at (0.85*\i,0) {\footnotesize \y};
                    \fi
                \fi
            }

        \node[draw=none] (stack) at (5, 2.1) {$
                \begin{pmatrix}
                    \sym_{\tstep - 1} \\
                    \sym_{\tstep - 4} \\
                    \sym_{2}
                \end{pmatrix}
            $};

        \draw[decorate, decoration={brace, amplitude=7pt, raise=-3pt}, thick] (stack.north east) -- (stack.south east) node[midway, right=4pt] {$\hiddState\left(\strlt\right)$};

        \draw[attn arrow, ETHGreen!30] (tape-0.north) to[out=90,in=270] (stack);
        \draw[attn arrow, thick] (tape-1.north) to[out=90,in=270] (stack);
        \draw[attn arrow, thick] (tape-3.north) to[out=90,in=270] (stack);
        \draw[attn arrow, ETHGreen!30] (tape-4.north) to[out=90,in=270] (stack);
        \draw[attn arrow, ETHGreen!30] (tape-5.north) to[out=90,in=270] (stack);
        \draw[attn arrow, thick] (tape-6.north) to[out=90,in=270] (stack);

        \node[fill=none] (out) at (0.75,3.25) {$\pLNSM\left(\eossym_\tstep\mid\str_{<\tstep}\right)$};

        \draw[comb arrow] (stack.west) to[out=165,in=270] (out.south) ;
        \node[fill=none] (out) at (2.3,1.7) {\footnotesize $\softmax\left(\outMtx \hiddState\left(\strlt\right) + \outBias\right)_{\eossym_\tstep}$};

    \end{tikzpicture}
    \caption{An illustration of how an RNN can store information about a fixed number of symbols (in this case, three) that have appeared in the string $\strlt$.
        Using some mechanism, the symbols $\sym_2, \sym_{\tstep - 4}, \sym_{\tstep - 1}$ have been selected for determining the continuation of the string and are stored in $\hiddState$.
        These symbols are used to compute the conditional probability of the next symbol $\eossym_\tstep$.}
    \label{fig:figure-1}
\end{figure}
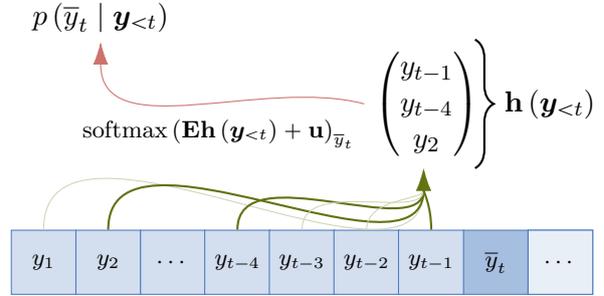

\citeposs{Indyk95} bound presents a \emph{worst-case} analysis: It reflects the number of neurons needed for an adversarially selected FSA.
However, it is easy to construct FSAs that can be encoded with exponentially fewer neurons than the number of states.
\cref{fig:compressible-fsa} exhibits an \ngram LM that, when encoded as an FSA, has $\nsymbols^{\ngr - 1}$ states, but can still be represented with $\bigOFun{\ngr \log\nsymbols}$ neurons.
Building on this insight, \citet{hewitt-etal-2020-rnns} show that an entire class of languages---$\boundedDyckkm$, i.e., the Dyck language over $\nBracketTypes$ parentheses types with nesting up to depth $\dyckMaxDepth$---can be encoded in logarithmic space.
Specifically, even though an FSA that accepts the $\boundedDyckkm$ language requires $\bigOFun{\nBracketTypes^\dyckMaxDepth}$ states, it can be encoded by an RNN with $\bigOFun{\dyckMaxDepth \log{\nBracketTypes}}$ neurons, which is memory-optimal.
Attractively, $\boundedDyckkm$ languages capture, to some extent, the bounded nested nature of human language.\footnote{Bounded Dyck languages offer a useful framework for modeling the hierarchical and recursive aspects of human language syntax with limited memory. However, they fall short of capturing the full complexity of human language, which often includes context sensitivity and ambiguity that may go beyond the capabilities of a deterministic model of computation.\looseness=-1}

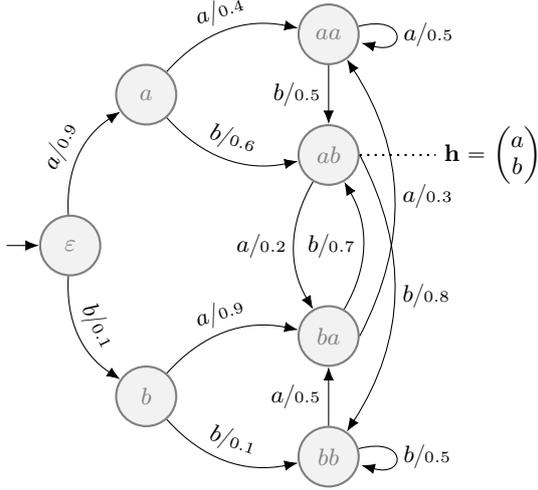
\begin{figure}
    \centering
    \footnotesize
    \begin{tikzpicture}[node distance=2cm]
        \node[state, initial] (eps) {$\eps$};
        \node[state, above of=eps, xshift=10mm, yshift=-0mm] (a) {$\syma$};
        \node[state, below of=eps, xshift=10mm, yshift=0mm] (b) {$\symb$};
        \node[state, right of=a, xshift=4mm, yshift=8mm] (aa) {$\syma\syma$};
        \node[state, right of=a, xshift=4mm, yshift=-8mm] (ab) {$\syma\symb$};
        \node[state, right of=b, xshift=4mm, yshift=8mm] (ba) {$\symb\syma$};
        \node[state, right of=b, xshift=4mm, yshift=-8mm] (bb) {$\symb\symb$};

        \draw[transition] (eps) edge[auto, bend left, sloped] node{ $\syma / \scriptstyle 0.9$ } (a);
        \draw[transition] (eps) edge[auto, bend right, sloped] node{ $\symb / \scriptstyle 0.1$ } (b);
        \draw[transition] (a) edge[auto, bend left, sloped] node{ $\syma / \scriptstyle 0.4$ } (aa);
        \draw[transition] (a) edge[auto, bend right, sloped] node{ $\symb / \scriptstyle 0.6$ } (ab);
        \draw[transition] (b) edge[auto, bend left, sloped] node{ $\syma / \scriptstyle 0.9$ } (ba);
        \draw[transition] (b) edge[auto, bend right, sloped] node{ $\symb / \scriptstyle 0.1$ } (bb);
        \draw[transition] (aa) edge[auto, loop right] node{ $\syma / \scriptstyle 0.5$ } (aa);
        \draw[transition] (aa) edge[left] node{ $\symb / \scriptstyle 0.5$ } (ab);
        \draw[transition] (ab) edge[auto, bend right, left] node{ $\syma / \scriptstyle 0.2$ } (ba);
        \draw[transition] (ab.east) to[right, bend left] node{ $\symb / \scriptstyle 0.8$ } (bb);
        \draw[transition] (ba.east) to[right, bend right] node{ $\syma / \scriptstyle 0.3$ } (aa);
        \draw[transition] (ba) edge[auto, bend right] node{ $\symb / \scriptstyle 0.7$ } (ab);
        \draw[transition] (bb) edge[auto] node{ $\syma / \scriptstyle 0.5$ } (ba);
        \draw[transition] (bb) edge[auto, loop right] node{ $\symb / \scriptstyle 0.5$ } (bb);

        \node[draw=none, right = of ab, xshift=-10mm] (h) {$\hiddState = \begin{pmatrix}
                    \syma \\
                    \symb
                \end{pmatrix}$};

        \draw[dotted, thick] (ab.east) to (h.west);
    \end{tikzpicture}
    \caption[fsafigure]{A simplified $3$-gram LM over $\alphabet = \set{\syma, \symb}$.
        Even though the number of states is exponential in $\ngr$, the hidden state of the RNN only has to keep the $\ngr - 1 = 2$ symbols of interest, each of which is represented by $\ceil{\log_2\nsymbols}$ bits.
        This is illustrated by the state $\syma \symb$ being represented as $\hiddState = \begin{pmatrix}
                \syma \\
                \symb
            \end{pmatrix}$.\footnotemark\looseness=-1}
    \label{fig:compressible-fsa}
    \vspace{-15pt}
\end{figure}
\footnotetext{For simplicity, the final weights and the binary representations in the hidden state are omitted.}

\citeposs{hewitt-etal-2020-rnns} result poses an interesting question: Are all languages that RNNs can implement efficiently hierarchical in nature?
We show that this is \emph{not} the case.
We revisit \citeposs{hewitt-etal-2020-rnns} construction and demonstrate that the same optimal compression can be achieved for a more general class of languages, which do not necessarily exhibit hierarchical structure.
Stated differently, RNNs do not necessarily encode hierarchical languages any more compactly than non-hierarchical ones.
To show this, we introduce probabilistic \defn{bounded pushdown automata} (\bpdaAcr{}s), probabilistic pushdown automata \cite{abney-etal-1999-relating} whose stack is bounded.
The probabilistic nature of \bpdaAcr{}s furthermore allows us to go beyond the binary recognition and reason about classes of \defn{language models}---probability distributions over strings---that RNN LMs can efficiently represent.
This is particularly interesting because it requires the RNN to not only efficiently encode the transition dynamics of the automaton but also the string probabilities.\footnote{Note the same lower bounds for FSAs do not apply to the \emph{probabilistic} case \citep{svete2023recurrent}.}
Thus, we generalize the result by \citet{hewitt-etal-2020-rnns} and give sufficient conditions for a probabilistic \bpdaAcr{} to be optimally compressible into an RNN.\looseness=-1

The non-hierarchical nature of \bpdaAcr{}s leads us to contend that the reason some LMs lend themselves to exponential compression has little to do with their hierarchical structure.
This offers two intriguing insights.
First, it shows that the inductive biases of RNN LMs might be difficult to link to a hierarchical structure.
Second, it provides a new perspective on which languages RNNs efficiently encode, i.e., those representable by sequential machines with a bounded stack.
Such a machine is illustrated in \cref{fig:figure-1}.\looseness=-1

\section{Preliminaries} \label{sec:preliminaries}
We begin by introducing some core concepts.
An \defn{alphabet} $\alphabet$ is a finite, non-empty set of \defn{symbols}.
Its \defn{Kleene closure} $\kleene{\alphabet}$ is the set of all strings of symbols in $\alphabet$.
The \defn{length} of the string $\str = \sym_1\ldots\sym_\strlen \in \kleene{\alphabet}$, denoted by $|\str|=\strlen$, is the number of symbols it contains.
A \defn{language model} $\pLM$ is a probability distribution over $\kleene{\alphabet}$.
Two LMs $\pLM$ and $\qLM$ are \defn{weakly equivalent} if $\pLM\left(\str\right) = \qLM\left(\str\right)$ for all $\str \in \kleene{\alphabet}$ and two families of LMs $\sP$ and $\sQ$ are weakly equivalent if, for any $\pLM \in \sP$, there exists a weakly equivalent $\qLM \in \sQ$ and vice versa.

Most modern LMs define $\pLM\left(\str\right)$ as a product of conditional probability distributions:
\begin{equation} \label{eq:lnlm}
    \pLN\left(\str\right) \defeq \pLNSM\left(\eos\mid\str\right) \prod_{\tstep = 1}^{|\str|} \pLNSM\left(\symt \mid \strlt\right),
\end{equation}
where $\eos \notin \alphabet$ is a distinguished \underline{e}nd-\underline{o}f-\underline{s}tring symbol.
We denote $\eosalphabet \defeq \alphabet \cup \left\{\eos\right\}$ and use the notation $\eossym \in \eosalphabet$ whenever $\eossym$ can also be $\eos$.
Such a definition is without loss of generality; any LM can be factorized in this form \citep{cotterell2024formal}.
However, not all models expressible as \cref{eq:lnlm} constitute LMs, i.e., 
a probabilistic model of the form given in \cref{eq:lnlm} may leak probability mass to infinite sequences \citep{du-etal-2023-measure}. 
In this paper, we assume all autoregressive models are tight, i.e., they place probability 1 on $\kleene{\alphabet}$.

A historically important class of LM are those that obey the \ngram assumption.
\begin{assumption} \label{def:ngram}
    The \defn{\ngram{} assumption} states that the probability $\pLNSM\left(\eossym_\tstep\mid \str_{<\tstep}\right)$ only depends on $\ngr-1$ previous symbols $\sym_{\tstep-1},\ldots,\sym_{\tstep-\ngr+1}$:
    \begin{equation}
        \pLNSM\left(\eossym_\tstep\mid \str_{<\tstep}\right) = \pLNSM\left(\eossym_\tstep \mid \sym_{\tstep-\ngr+1} \cdots \sym_{\tstep-1}\right).
    \end{equation}
\end{assumption}

\cref{fig:compressible-fsa} shows an example of an $2$-gram LM.
The weights on the transitions denote the conditional probabilities of the next symbol given the previous $\ngr - 1 = 2$ symbols encoded by the current state.

\subsection{Recurrent Neural Language Models} \label{sec:rnns}

The conditional distributions of recurrent neural LMs are given by a recurrent neural network.
\citet{hewitt-etal-2020-rnns} present results for both Elman RNNs \citep{Elman1990} as well as those derived from the LSTM architecture \citep{10.1162/neco.1997.9.8.1735}.
Our paper focuses on Elman RNNs, as they are easier to analyze and suffice to present the main ideas, which also easily generalize to the LSTM case.\looseness=-1
\begin{definition} \label{def:elman-rnn}
    An \defn{Elman RNN} $\rnn = \elmanrnntuple$ is an RNN with the hidden state recurrence
    \begin{subequations}
        \begin{alignat}{2}
            \hiddStateZero & = \initstate  \quad\quad\quad\quad\quad\quad\quad\quad\quad\quad                     &  & {\color{ETHGray}(\tstep=0)}\label{eq:elman-initialization} \\
            \hiddStatet    & = \sigmoid\left(\recMtx \hiddStatetminus + \inMtx \inEmbedSymt + \bias \right) \quad &  & {\color{ETHGray}(\tstep>0)},\label{eq:elman-update-rule}
        \end{alignat}
    \end{subequations}
    where $\hiddStatet \in \R^\hiddDim$ is the hidden state at time step $\tstep$, $\initstate \in \R^\hiddDim $ is an initialization parameter, $\symt\in\alphabet$ is the input symbol at time step $\tstep$, $\inEmbedding\colon \alphabet \to \R^\embedDim$ is a symbol representation function, $\recMtx \in \R^{\hiddDim \times \hiddDim}, \inMtx \in \R^{\hiddDim \times \embedDim}$ are parameter matrices, $\biasVech \in \R^{\hiddDim}$ is a bias vector, and $\sigmoid\colon\R^\hiddDim\to\R^\hiddDim$ is an element-wise non-linear activation function.
    We refer to the dimensionality of the hidden state, $\hiddDim$, as the \defn{size} of the RNN, and to each entry of the hidden state as a \defn{neuron}.

\end{definition}

Because $\hiddStatet$ represents the string consumed by the RNN, we also use the evocative notation $\hiddState\left(\str\right)$ to denote the result of the application of \cref{eq:elman-update-rule} over the string $\str = \sym_1 \cdots \sym_t$.
An RNN can be used to specify an LM by using the hidden states to define the conditional distributions over $\eossym$ given $\strlt$.\looseness=-1
\begin{definition} \label{def:rnn-lm}
    Let $\rnn$ be an Elman RNN, $\outMtx \in \R^{\eosnsymbols \times \hiddDim}$ and $ \outBias \in \R^{\eosnsymbols}$.
    An \defn{RNN LM} is an LM whose conditional distributions are defined as
    \begin{equation}
        \pLNSM(\eossym \mid \strlt) \defeq \softmax\left(\outMtx \hiddState(\strlt) + \outBias\right)_{\eossym}
    \end{equation}
    for $\eossym \in \eosalphabet, \strlt \in \kleene{\alphabet}$.\footnote{
        Throughout the paper, we index vectors and matrices directly with symbols from $\eosalphabet$.
        This is possible because of a trivial bijective relationship between $\eosalphabet$ and $\NTo{\eosnsymbols}$.
    }
    We term $\outMtx$ the \defn{output matrix} and $\outBias$ the \defn{bias vector}.
\end{definition}

\subsubsection{Activation Functions and Precision}
An important consideration when analyzing RNN LM's ability to compute the probability of a string $\str \in \kleene{\alphabet}$ is the number of bits required to represent the entries in $\hiddStatet$ and how the number of bits scales with the length of the string, $|\str|$.
This depends both on the dynamics of the RNN and the activation function used \citep{merrill-2019-sequential} and motivates the following definition of precision.
\begin{definition}
    The \defn{precision} of an RNN is the number of bits required to represent $\hiddState\left(\str\right)$:
    \begin{equation}
        \precisionFun{\str}{\rnn} \defeq \max_{\idxd\in\NTo{\hiddDim}}\min_{\substack{p,q\in\N,\\ \frac{p}{q}=\hiddState\left(\str\right)_{\idxd}}} \lceil\log_2 p\rceil + \lceil\log_2 q\rceil.
    \end{equation}
    We say that an Elman RNN is of \defn{constant precision} if $\precisionFun{\str}{\rnn} = \bigOFun{1}$, i.e., if $\precisionFun{\str}{\rnn} \leq C$ for all $\str \in \kleene{\alphabet}$ and some $C \in \R$.
    It is of \defn{unbounded precision} if $\precisionFun{\str}{\rnn}$ cannot be bounded by a function of $|\str|$.
\end{definition}
Common choices for the nonlinear function $\sigmoid$ in \cref{eq:elman-update-rule} are the Heaviside function $\heaviside\left(x\right) \defeq \ind{x > 0}$, its continuous approximation, the sigmoid function $\sigmoid(x) \defeq \frac{1}{1 + \exp\left(-x\right)}$, and $\ReLU \defeq \max(0, x)$.
\citet{hewitt-etal-2020-rnns} focus on sigmoid activations as originally presented by \citet{Elman1990}.
However, to simply the analysis, they assume that $\abs{x} \gg 0$, such that $\sigmoidFun{x} \approx 0$ or $\sigmoidFun{x} \approx 1$ since $\lim_{x \to -\infty} \sigmoidFun{x} = 0$ and $\lim_{x \to \infty} \sigmoidFun{x} = 1$.
Concretely, they define a parameter $\beta \in \Rplus$ and assume $\sigmoidFun{x} = 0$ for $x < -\beta$ and $\sigmoidFun{x} = 1$ for $x > \beta$.
Restricting to values of $x$ with $\abs{x} > \beta$ results in a \emph{constant}-precision RNN, equivalent to one with the Heaviside activation function.
Its hidden states live in $\set{0, 1}^\hiddDim$, meaning that they can be interpreted as binary vectors.
The update rule \cref{eq:elman-update-rule} can then be interpreted as a logical operation on the hidden state and the input symbol \citep{svete2023recurrent}.
We simplify the exposition by working directly with $\heaviside$-activated RNNs, which additionally allows for easy analysis of the RNN's precision.\footnote{Since the Heaviside function can be implemented as a difference of two $\ReLU$ functions (over the \emph{integers}), all our constructions work with the $\ReLU$ activation function as well, albeit with networks of twice the size.}

\section{Bounded Pushdown Automata} \label{sec:bounded-stack-langauges}
In this section, we present a broad class of LMs that provide a convenient framework for analyzing the sufficient conditions for efficient representation by RNNs: bounded pushdown automata.
They maintain a stack, albeit one that contains at most a fixed number of symbols.
\begin{definition}
    An \defn{$\stackBound$-bounded pushdown automaton} (\bpdaAcr) is a tuple $\bpdatuple$ where $\stackBound \in \N$, $\alphabet$ is an alphabet of input symbols, $\stackalphabet$ is an alphabet of stack symbols, $\stackMod \colon \kleeneTom{\stackalphabet} \times \alphabet \times \kleeneTom{\stackalphabet} \to [0, 1]$ is a weighted transition function, $\initf\colon \kleeneTom{\stackalphabet} \to \left[0, 1\right]$ is the initial weight function, and $\finalf\colon \kleeneTom{\stackalphabet} \to \left[0, 1\right]$ is the final weight function.
\end{definition}
The string $\stackstr$ currently stored in the stack is the \bpdaAcr{}'s \defn{configuration}.
We read $\stackstr$ bottom to top, e.g., in the stack $\stackseq = \stacksym_1 \stacksym_2 \cdots \stacksym_\ell$, $\stacksym_1$ is at the bottom of the stack, while $\stacksym_\ell$ is at the top.
We say that the $\stackBound$-bounded stack is \defn{empty} if $|\stackstr| = 0$ (equivalently, $\stackstr = \eps$) and \defn{full} if $|\stackstr| = \stackBound$.
Sometimes, it will be useful to think of the bounded stack as always being full---in that case, if there are $\ell<\stackBound$ elements on the stack, we assume that the other $\stackBound - \ell$ elements are occupied by a special placeholder symbol $\placeholderSym \notin \stackalphabet$; see \cref{fig:bpda-acceptance} for an illustration.
We denote $\iotaStackAlphabet \defeq \stackalphabet \cup \set{\placeholderSym}$.\looseness=-1

We call a \bpdaAcr{} \defn{probabilistic} if the following two equations hold
\begin{subequations}
    \begin{align}
        \sum_{\stackstr \in \kleeneTom{\stackalphabet}} \initfFun{\stackstr} & = 1 \\
        \!\!\!\sum_{\substack{\sym \in \alphabet                                   \\ \stackstr' \in \kleeneTom{\stackalphabet}}} \stackModFun{\stackstr, \sym, \stackstr'} + \finalfFun{\stackstr} & = 1,  \forall \stackstr \in \kleeneTom{\stackalphabet}.
    \end{align}
\end{subequations}
We call a \bpdaAcr{} $\pda = \bpdatuple$ \defn{deterministic} if there exists exactly one $\stackstr \in \kleeneTom{\stackalphabet}$ with $\initfFun{\stackstr} > 0$ and, for any $\stackstr \in \kleeneTom{\stackalphabet}$ and $\sym \in \alphabet$, there is at most one $\stackstr' \in \kleeneTom{\stackalphabet}$ with $\stackModFun{\stackstr, \sym, \stackstr'} > 0$.
For deterministic \bpdaAcr{}s, we also define:
\begin{itemize}[itemsep=0.1pt]
    \item $\nextStackFun{\stackstr, \sym} \defeq \stackstr'$ for $\stackModFun{\stackstr, \sym, \stackstr'} > 0$ as the function returning the (deterministic) next configuration of the \bpdaAcr,
    \item $\transitionWeightFun{\stackstr, \sym} \defeq w$ for $w = \stackModFun{\stackstr, \sym, \nextStackFun{\stackstr, \sym}}$ as the weight of the only $\sym$-labeled transition from $\stackstr$, and
    \item $\strToStackFun{\str}$ as the (unique) stack configuration reached upon reading $\str$.
\end{itemize}

\paragraph{Runs of a \bpdaAcr.}
A \bpdaAcr{} processes a string $\str = \sym_1 \cdots \sym_\strlen \in \kleene{\alphabet}$ left to right by reading its symbols and changing its configurations accordingly.
It starts with an initial configuration $\stackstr_0$ according to $\initf$.
Then, it updates the stack according to $\stackMod$ for each symbol in $\str$, resulting in a sequence of stacks $\arun = \stackstr_0, \stackstr_1, \ldots, \stackstr_\strlen$ where $\stackModFun{\stackstr_{\tstep - 1}, \sym_\tstep, \stackstr_\tstep} > 0$.
Each such sequence of stacks $\arun$ is called a \defn{run} and we will denote all runs of $\pda$ on $\str$ as $\runsFun{\pda}{\str}$.
A probabilistic BPDA $\pda$ assigns $\str$ the probability
\begin{align}
     & \pda\left(\str\right) \defeq                   \\
     & \sum_{\substack{\arun \in \runsFun{\pda}{\str}, \\ \arun = \stackstr_0, \stackstr_1, \ldots, \stackstr_\strlen}} \initfFun{\stackstr_0} \left[\prod_{\tstep = 1}^\strlen \stackModFun{\stackstr_{\tstep - 1}, \sym_\tstep, \stackstr_\tstep} \right] \finalfFun{\stackstr_\strlen}. \nonumber
\end{align}
In this sense, \emph{probabilistic} BPDAs induce distributions over strings and generalize \citeposs{hewitt-etal-2020-rnns} thresholded string acceptance to the probabilistic setting.\footnote{\citeposs{hewitt-etal-2020-rnns} construction focuses on binary \emph{recognition} of languages through their notion of truncated language recognition (cf. \cref{def:truncated-recognition} in \cref{sec:relation-to-hewitt-et-al}).
Specifically, they show that RNN LMs can assign high enough probabilities to correct continuations of strings while assigning low probabilities to incorrect continuations.}
This is illustrated in \cref{fig:bpda-acceptance}.

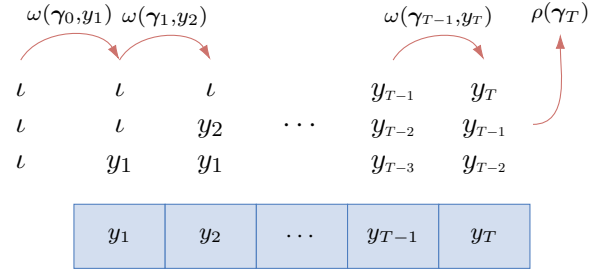
\begin{figure}
    \centering

    \begin{tikzpicture}[
        tape node/.style={draw=ETHBlue!80,minimum size=0.85cm,minimum width=1.2cm,fill=ETHBlue!20},
        comb arrow/.style={-{Latex[length=2mm,width=1.25mm]},ETHRed!70},
        ]

        \foreach \i/\y in {0/$\sym_1$,1/$\sym_2$,2/$\cdots$,3/$\sym_{\strlen-1}$,4/$\sym_{\strlen}$} {
                \node[tape node] (tape-\i) at (1.2*\i,0) {\footnotesize \y};
            }

        \node[draw=none] (stack-init) [above = 2mm of tape-0, xshift=-1.3cm] {
            \begin{tabular}{c}
                $\placeholderSym$ \\
                $\placeholderSym$ \\
                $\placeholderSym$
            \end{tabular}
        };

        \node[draw=none] (stack-0) [above = 2mm of tape-0] {
            \begin{tabular}{c}
                $\placeholderSym$ \\
                $\placeholderSym$ \\
                $\sym_1$
            \end{tabular}
        };

        \node[draw=none] (stack-1) [above = 2mm of tape-1] {
            \begin{tabular}{c}
                $\placeholderSym$ \\
                $\sym_2$          \\
                $\sym_1$
            \end{tabular}
        };

        \node[draw=none] (stack-3) [above = 2mm of tape-2] {
            \begin{tabular}{c}
                \\
                $\cdots$ \\
                \\
            \end{tabular}
        };

        \node[draw=none] (stack-3) [above = 2mm of tape-3] {
            \begin{tabular}{c}
                $\sym_{\scaleto{\strlen - 1}{4pt}}$ \\
                $\sym_{\scaleto{\strlen - 2}{4pt}}$ \\
                $\sym_{\scaleto{\strlen - 3}{4pt}}$
            \end{tabular}
        };

        \node[draw=none] (stack-4) [above = 2mm of tape-4] {
            \begin{tabular}{c}
                $\sym_{\scaleto{\strlen}{4pt}}$     \\
                $\sym_{\scaleto{\strlen - 1}{4pt}}$ \\
                $\sym_{\scaleto{\strlen - 2}{4pt}}$
            \end{tabular}
        };

        \draw[comb arrow] (stack-init.north) to[out=60,in=120] node[above]{$\scriptstyle \color{black} \transitionWeightFun{\stackstr_{0}, \sym_1}$} (stack-0.north) ;
        \draw[comb arrow] (stack-0.north) to[out=60,in=120] node[above]{$\scriptstyle \color{black} \transitionWeightFun{\stackstr_{1}, \sym_2}$} (stack-1.north) ;
        \draw[comb arrow] (stack-3.north) to[out=60,in=120] node[above]{$\scriptstyle \color{black} \transitionWeightFun{\stackstr_{\scaleto{\strlen - 1}{4pt}}, \sym_{\scaleto{\strlen}{3pt}}}$} (stack-4.north) ;

        \node[draw=none] (final) [above = 2mm of stack-4, xshift = 10mm, yshift = 1.5mm] {$\scriptstyle \finalfFun{\stackstr_\strlen}$};

        \draw[comb arrow] (stack-4.east) to[out=0,in=270] (final.south) ;

    \end{tikzpicture}
    \caption{An illustration of how a \bpdaAcr can compute the probability of a string under an \ngram LM.}
    \label{fig:bpda-acceptance}
\end{figure}

\paragraph{Pushing and popping.}
We define the transition function $\stackMod$ as a general function of the stack configuration and the input symbol.
As important special cases, $\stackMod$ can define the standard \popOp and \pushOp operations.
Popping the top of stack $\stackTop$ is performed by transitions of the form $\left(\stackstr \stackTop, \sym, \stackstr\right)$ with $\stackModFun{\stackstr \stackTop, \sym, \stackstr} > 0$;
\begin{equation}
    \popOpFun{\stackstr \stackTop, \sym, \stackstr} \defeq \stackModFun{\stackstr \stackTop, \sym, \stackstr}.
\end{equation}
Pushing definitionally \emph{increases} the size of the stack, which raises the question of how to handle stack overflows.
Rather than simply rejecting \pushOp{}s that would result in a stack overflow, we allow a \bpdaAcr to \emph{discard} the bottom of the stack when pushing; this is easy to specify with the general transition function $\stackMod$.
Given the current stack configuration $\stackstr = \stacksym_1 \cdots \stacksym_\ell$, we define the weight of pushing the symbols $\stackTop = \stacksym'_1 \cdots \stacksym'_{r} \in \kleeneTom{\stackalphabet}$ as
\begin{align}
     & \pushOpFun{\stackstr, \sym, \stackstr \stackTop}   \defeq                                                                                      \\
     & \begin{cases}
           \stackModFun{\stackstr, \sym, \stackstr \stackTop}                                                    & \ifcondition \ell + r \leq \stackBound \\
           \stackModFun{\stackstr, \sym, \stacksym_{\ell + r - \stackBound + 1} \cdots \stacksym_\ell \stackTop} & \otherwisecondition.
       \end{cases} \nonumber
\end{align}
This definition is without loss of generality; we can always define \bpdaAcr{}s that assign transitions resulting in a stack overflow probability $0$.
$\pushOp$ is illustrated in \cref{fig:shift-op}.

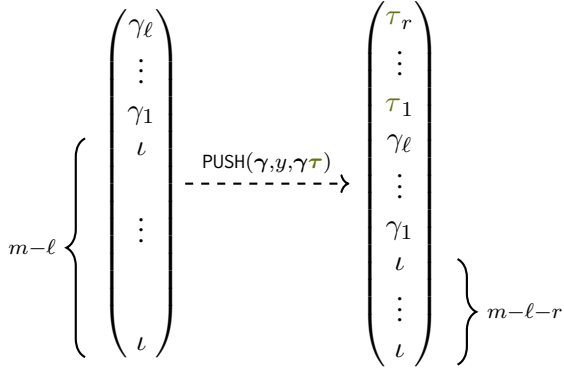
\begin{figure}
    \centering

    \begin{tikzpicture}

        \node[draw=none] (stack-1) at (-1.7, 0) {$
                \begin{pmatrix}
                    \stacksym_\ell       \\
                    \vdots               \\
                    \stacksym_1          \\
                    \placeholderSym      \\
                                   \\
                    \vdots               \\
                                   \\
                                   \\
                    \placeholderSym      \\
                \end{pmatrix}
            $};

        \node[draw=none] (stack-2) at (1.7, 0) {$
                \begin{pmatrix}
                    \stackTopSym_r \\
                    \vdots \\
                    \stackTopSym_1 \\
                    \stacksym_\ell                   \\
                    \vdots                           \\
                    \stacksym_1                      \\
                    \placeholderSym                  \\
                    \vdots                           \\
                    \placeholderSym                  \\
                \end{pmatrix}
            $};

        \draw[decorate, decoration={brace, amplitude=7pt}, thick] (2.5, -1) -- (2.5, -2.4) node[midway, right=7pt] {$\scriptstyle \stackBound - \ell - r$};

        \draw[decorate, decoration={brace, mirror, amplitude=7pt}, thick] (-2.4, 0.6) -- (-2.4,-2.3) node[midway, left=7pt] {$\scriptstyle \stackBound - \ell$};

        \draw[->, dashed, thick] (stack-1) to node[above, midway] {$\scriptstyle \pushOpFun{\stackstr, \sym, \stackstr \stackTop}$} (stack-2);

    \end{tikzpicture}
    \caption{$\pushOp$ moves the stack down, discards the bottom-most elements, and inserts a new top.}
    \label{fig:shift-op}
\end{figure}

\subsection{Efficiently Representable \bpdaAcr{}s}
It is easy to see that there are $\bigOFun{\iotastacknsymbols^{\stackBound}}$ possible $\stackBound$-bounded stacks over $\stackalphabet$, defining the number of configurations a \bpdaAcr can be in.
Interpreting the configurations as states of a (probabilistic) FSA, general constructions of RNN simulating FSAs mentioned in \cref{sec:intro} would require $\bigOmega{\nsymbols \iotastacknsymbols^{\frac{\stackBound}{2}}}$ neurons, exponentially many in the size of the stack \citep{Minsky1954,Dewdney1977,Indyk95,svete2023efficient,svete2023recurrent}.
One might, however, hope to exploit the special structure of the state space induced by the stack configurations to come up with more efficient representations.
This motivates the following definition.\looseness=-1
\begin{definition} \label{def:efficiently representable}
    A \bpdaAcr $\pda = \bpdatuple$ is called \defn{$C$-efficiently representable} if there exists a weakly equivalent RNN LM of size
    \begin{equation}
        \hiddDim \leq C \ceil{\log_2{\iotastacknsymbols^\stackBound}} = C \stackBound \ceil{\log_2{\iotastacknsymbols}}.
    \end{equation}
\end{definition}
Intuitively, \cref{def:efficiently representable} characterizes efficiently representable \bpdaAcr{}s as those that can be defined by an RNN LM with logarithmically many neurons in the number of configurations.
This is a natural generalization of (efficient) \emph{thresholded} acceptance defined by \citet{hewitt-etal-2020-rnns} to the probabilistic (language modeling) setting.\footnote{Thresholded acceptance defines string recognition based on their conditional probabilities. See \cref{sec:relation-to-hewitt-et-al} for details.}

\cref{def:efficiently representable} paves the way to a formalization of LMs efficiently representable by RNN LMs with \bpdaAcr{}s.
However, \bpdaAcr{}s are too rich to be efficiently representable in general.
The following theorem shows that \bpdaAcr{} LMs are weakly equivalent to LMs induced by probabilistic finite-state automata, a classic family of computational models with a well-described relationship to RNNs.
\begin{restatable}{reTheorem}{bpdasPfsasThm} \label{thm:bpda-pfsa}
    The family of LMs induced by \pfsaAcr{}s is weakly equivalent to the family of LMs induced by \bpdaAcr{}s.
\end{restatable}
\begin{proof}
    See \cref{app:bounded-stack-automata}.
\end{proof}
Because \bpdaAcr{}s define the same class of LMs as probabilistic FSAs, they cannot, in general, be represented more efficiently than with $\bigOmega{\nsymbols \iotastacknsymbols^{\stackBound}}$ neurons---a more efficient simulation of general \bpdaAcr{}s would contradict the known lower bounds on simulating FSAs \citep{Indyk95,svete2023recurrent}.
This implies that we inevitably must \emph{restrict} the class of \bpdaAcr{}s to ensure their efficient representability.
The following definitions will help us characterize efficiently representable \bpdaAcr{}s.
\paragraph{Vectorial representation of a stack.}
When connecting \bpdaAcr{}s to RNNs, it is useful to think of the stack as a vector.
Setting $\logStackN \defeq \ceil{\log_2{\iotastacknsymbols}}$, we define the \defn{stack vector} function $\stackVec\colon \kleeneTom{\stackalphabet} \to \set{0, 1}^{\stackBound \logStackN}$ as
\begin{equation} \label{eq:vectorial-representation}
    \stackVecFun{\stacksym_1 \cdots \stacksym_\stackBound} \defeq \begin{pmatrix}
        \binEncFun{\stacksym_\stackBound} \\
        \vdots                            \\
        \binEncFun{\stacksym_1}           \\
    \end{pmatrix} \in \set{0, 1}^{\stackBound \logStackN}.
\end{equation}
Here, $\binEnc \colon \iotaStackAlphabet \to \set{0, 1}^{\logStackN}$ is the binary encoding function, where we assume that $\binEncFun{\placeholderSym} = \zero_{\logStackN}$, the $\logStackN$-dimensional vector of zeros.
More precisely, let $\funcc \colon \iotaStackAlphabet \to \set{0, 1, \ldots, \stacknsymbols}$ be a bijection between $\iotaStackAlphabet$ and $\set{0, 1, \ldots, \stacknsymbols}$ such that $\funcc\left(\placeholderSym\right) = 0$.
Then $\binEncFun{\stacksym}$ is the binary representation of $\funcc\left(\stacksym\right)$.
\begin{definition} \label{def:stack-affine}
    A function $\func\colon \kleeneTom{\stackalphabet} \to \kleeneTom{\stackalphabet}$ is \defn{stack-affine} if there exists a matrix $\mM \in \R^{\stackBound \logStackN \times \stackBound \logStackN}$ and a vector $\vv \in \R^{\stackBound \logStackN}$ such that, for all $\stackstr \in \kleeneTom{\stackalphabet}$, it holds that $\stackVecFun{\func\left(\stackstr\right)} = \mM \stackVecFun{\stackstr} + \vv$.
\end{definition}
\begin{definition} \label{def:K-varied}
    A function $\kVariedf\colon \kleeneTom{\stackalphabet} \times \alphabet \to \kleeneTom{\stackalphabet}$ is \defn{$K$-varied} in $\alphabet$ if there exists a partition $\alphabet = \alphabet_1 \sqcup \cdots \sqcup \alphabet_K$ such that, for all $k = 1, \ldots, K$, it holds that $\kVariedf\left(\stackstr, \sym\right) = \kVariedf\left(\stackstr, \sym'\right) \defeq \kVariedf_k\left(\stackstr\right)$ for all $\sym, \sym' \in \alphabet_k$ and $\stackstr \in \kleeneTom{\stackalphabet}$.
\end{definition}
\begin{definition} \label{def:alphabet-determined}
    A function $\alphabetDeterminedf\colon \kleeneTom{\stackalphabet} \times \alphabet \to \kleeneTom{\stackalphabet}$ is \defn{$\alphabet$-determined} if there exists a function $\funcs\colon \alphabet \to \stackalphabet$ and a family of partitions $\left(\NTo{\stackBound} = \sJ^\sym_1 \sqcup \sJ^\sym_2 \sqcup \sJ^\sym_3\right)_{\sym \in \alphabet}$ such that, for all $\stackstr \in \kleeneTom{\stackalphabet}$, it holds that
    \begin{equation}
        \alphabetDeterminedf\left(\stackstr, \sym\right)_\idxj =
        \begin{cases}
            \stacksym_\idxj         & \ifcondition \idxj \in \sJ^\sym_1 \\
            \funcs\left(\sym\right) & \ifcondition \idxj \in \sJ^\sym_2 \\
            \placeholderSym         & \ifcondition \idxj \in \sJ^\sym_3
        \end{cases}.
    \end{equation}
\end{definition}
In words, a stack-affine function can be implemented by an affine transformation of the vectorial representation of the stack, a $K$-varied function can be decomposed into $K$ different functions that are invariant to the input symbol, and a $\alphabet$-determined function changes the stack in a manner that only depends on the input symbol: it either keeps a symbol the same, replaces it with $\funcs\left(\sym\right)$, or empties the slot (inserting the placeholder symbol $\placeholderSym$).
Importantly, a $\alphabet$-determined function acts independent of the stack $\stackstr$.

Lastly, we consider the efficient representation of next-symbol probabilities.
For a deterministic \bpdaAcr $\pda = \bpdatuple$, we define
\begin{subequations}
    \begin{align}
        \pLM\left(\sym \mid \stackstr\right) & \defeq \transitionWeightFun{\stackstr, \sym} \\
        \pLM\left(\eos\mid\stackstr\right)   & \defeq \finalfFun{\stackstr},
    \end{align}
\end{subequations}
for $\stackstr \in \kleeneTom{\stackalphabet}$ and $\sym \in \alphabet$ and
\begin{equation}
    \pLM\left(\eossym \mid \str\right) \defeq \pLM\left(\eossym \mid \strToStackFun{\str}\right),
\end{equation}
for $\str \in \kleene{\alphabet}$ and $\eossym \in \eosalphabet$.
    \begin{definition} \label{def:representation-compatible}
        A deterministic \bpdaAcr is \defn{representation-compatible} if there exists a matrix $\outMtx \in \R^{\eosnsymbols \times \stackBound \logStackN}$ and a vector $\outBias \in \R^\eosnsymbols$ such that, for every $\stackstr \in \kleeneTom{\stackalphabet}$, it holds for all $\eossym \in \eosalphabet$ that
        \begin{equation}
            \log \pLM\left(\eossym \mid \stackstr \right) = \softmaxfunc{\outMtx \stackVecFun{\stackstr} + \outBias}{\eossym}.
        \end{equation}
    \end{definition}

\section{Efficiently Representing \bpdaAcr{}s} \label{sec:rnns-bounded-stacks}
The introduced technical machinery allows us to present our main result.
\begin{restatable}{reTheorem}{stackUpdatesTheorem} \label{thm:efficiently-representable-bpdas}
    Let $\pda = \bpdatuple$ be a deterministic representation-compatible (cf. \cref{def:representation-compatible}) \bpdaAcr where
    \begin{equation} \label{eq:mu-structure}
        \stackModFun{\stackstr, \sym, \stackstr'} = \begin{cases}
            \transitionWeightFun{\stackstr, \sym} & \ifcondition  \stackstr' = \alphabetDeterminedf\left(\kVariedf\left(\stackstr, \sym\right)\right) \\
            0                                     & \otherwisecondition
        \end{cases}
    \end{equation}
    for a $\alphabet$-determined function $\alphabetDeterminedf$ (cf. \cref{def:alphabet-determined}) and a $K$-varied function (cf. \cref{def:K-varied}) $\kVariedf$ where all $\kVariedf_k$ are stack-affine (cf. \cref{def:stack-affine}).
    Then, $\pda$ is $K$-efficiently representable (cf. \cref{def:efficiently representable}).
\end{restatable}
This generalizes \citeposs{hewitt-etal-2020-rnns} result by considering the more general class of \bpdaAcr LMs, which includes $\boundedDyckkm$ LMs as a special case, and by incorporating the probabilistic nature of LMs.
The requirement for the \bpdaAcr to be representation compatible is crucial for \cref{thm:efficiently-representable-bpdas}; there exist \bpdaAcr{}s that fulfill all the criteria of \cref{thm:efficiently-representable-bpdas} but the one on representation compatibility that are not efficiently representable by RNN LMs.
\begin{restatable}{reTheorem}{impossibilityThm} \label{thm:impossibility}
    There exist deterministic \bpdaAcr{}s whose transition function $\stackMod$ conforms to the structure in \cref{eq:mu-structure}, but which are not efficiently representable for any $K$ independent of $\nsymbols$.
\end{restatable}
\begin{proof}[Proof intuition.]
    Consider a \bpdaAcr where $\stackalphabet = \alphabet$.\footnote{Examples of such \bpdaAcr{}s are those defining \ngram LMs.}
    Intuitively, this holds because a \bpdaAcr LM defines exponentially many possible (independently specified) conditional distributions over the next symbol given the current stack configuration.
    For an RNN LM to be weakly equivalent, it would need to, after affinely transforming the hidden state (applying the output matrix and the output bias vector), match the logits of these distributions.
    However, as the logits can in general span the entire $\nsymbols$-dimensional space, the hidden state $\hiddState$ would have to be of size $\bigOmega{\nsymbols}$ to be able to do that, rather than $K \stackBound \log_2\nsymbols$, as we have in the case of efficient simulation.
    See \cref{app:proofs} for details.
\end{proof}

\subsection{Known Efficiently Representable Families}
\cref{thm:efficiently-representable-bpdas} offers a very abstract characterization of (the components of) efficiently representable \bpdaAcr{}s.
Here, we frame this characterization in the context of two well-known LM families: bounded Dyck and \ngram LMs.

\begin{proposition}
    Representation-compatible LMs over $\boundedDyckkm$ are $2$-efficiently representable with $\stackBound = \dyckMaxDepth$ and $\stackalphabet = \set{\langle_\idxi \mid \idxi \in \NTo{\nBracketTypes}}$.
\end{proposition}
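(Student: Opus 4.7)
The plan is to instantiate the obvious deterministic \bpdaAcr $\pda$ that realises the given LM via the bracket-matching stack discipline, and then invoke \cref{thm:efficiently-representable-bpdas} with $K = 2$. Set $\stackBound = \dyckMaxDepth$, $\stackalphabet = \set{\openBr{i} : i \in \NTo{\nBracketTypes}}$, and $\alphabet = \set{\openBr{i}, \closeBr{i} : i \in \NTo{\nBracketTypes}}$. Take $\initf$ concentrated on the all-placeholder stack and inherit $\transitionWeight$ and $\finalf$ from the given LM, letting $\stackModFun{\stackstr, \openBr{i}, \cdot}$ push $\openBr{i}$ onto $\stackstr$ and $\stackModFun{\stackstr, \closeBr{i}, \cdot}$ pop the matching top of $\stackstr$; weight $0$ is assigned to bracket mismatches, stack overflows, and premature terminations so that $\pda$ exactly realises the given LM on $\boundedDyckkm$.

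Next, I will exhibit the decomposition of $\stackMod$ from \cref{eq:mu-structure}. Using the partition $\alphabet_1 = \set{\openBr{i}}_{i \in \NTo{\nBracketTypes}}$, $\alphabet_2 = \set{\closeBr{i}}_{i \in \NTo{\nBracketTypes}}$, define
\begin{align*}
\kVariedf_1(\stacksym_1 \cdots \stacksym_\stackBound) &\defeq \stacksym_2 \cdots \stacksym_\stackBound\, \placeholderSym, \\
\kVariedf_2(\stacksym_1 \cdots \stacksym_\stackBound) &\defeq \placeholderSym\, \stacksym_1 \cdots \stacksym_{\stackBound - 1}.
\end{align*}
Under $\stackVec$, each $\kVariedf_k$ is a block shift of $\logStackN$ coordinates with zeros inserted at the vacated end, i.e.\ an affine map whose matrix $\mM$ is a block-shift and whose translation is $\vv = \zero$, so both are stack-affine. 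Define $\alphabetDeterminedf$ by $\funcs(\openBr{i}) = \openBr{i}$, $\sJ^{\openBr{i}}_1 = \NTo{\stackBound - 1}$, $\sJ^{\openBr{i}}_2 = \set{\stackBound}$, $\sJ^{\openBr{i}}_3 = \emptyset$ for opens, and $\sJ^{\closeBr{i}}_1 = \NTo{\stackBound}$, $\sJ^{\closeBr{i}}_2 = \sJ^{\closeBr{i}}_3 = \emptyset$ for closes; this is $\alphabet$-determined by construction.

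It then remains to verify by a short case analysis that $\alphabetDeterminedf(\kVariedf(\stackstr, \sym))$ matches $\stackMod$ on every transition with positive weight: for $\openBr{i}$ the composition drops the bottom slot, shifts everything one position down in the stack indexing, and writes $\openBr{i}$ in the top slot, realising a push; for $\closeBr{i}$ it discards the top and pads a placeholder at the bottom, realising a pop. The boundary configurations (full stack with an open, empty stack with a close, bracket mismatch) are exactly those on which $\transitionWeight$ vanishes, so the uniform mechanical action of $\alphabetDeterminedf \circ \kVariedf$ across all stacks is harmless. Together with the assumed representation-compatibility, this verifies every hypothesis of \cref{thm:efficiently-representable-bpdas} with $K = 2$, and the claim follows.

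The only genuinely fiddly step---not a deep obstacle---is aligning the vectorial convention of $\stackVec$, where the indexed top $\stacksym_\stackBound$ occupies the first $\logStackN$ coordinates, with the choice of where placeholders pad and which direction each $\kVariedf_k$ shifts, so that push and pop are realised correctly by the composition with $\alphabetDeterminedf$; once this bookkeeping is pinned down, all remaining checks are routine.
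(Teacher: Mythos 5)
Your proposal is correct and follows essentially the same route as the paper's proof: identify $\pushOp$/$\popOp$ as the only operations, split $\alphabet$ into opening and closing brackets to get a $2$-varied family of stack-affine shifts, observe the insertion/identity structure makes the update $\alphabet$-determined, and invoke \cref{thm:efficiently-representable-bpdas} with $K=2$ and $\stackalphabet$ containing only opening brackets. You are merely more explicit than the paper (spelling out $\kVariedf_1,\kVariedf_2$, the partitions $\sJ^\sym_\bullet$, and the zero-weight boundary cases), and your shift/padding convention matches the one the paper uses for $\pushOp$.
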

\begin{proof}
    LMs over $\boundedDyckkm$ define distributions over strings of well-nested parentheses of $\nBracketTypes$ up do depth $\dyckMaxDepth$.
    They work over the alphabet $\alphabet = \set{\langle_\idxi \mid \idxi \in \NTo{\nBracketTypes}} \cup \set{\rangle_\idxi \mid \idxi \in \NTo{\nBracketTypes}}$.
    A \bpdaAcr modeling an LM over $\boundedDyckkm$ only has to define $\popOp$ and $\pushOp$ operations.
    Specifically, $\langle_\idxi$ corresponds to a push operation $\pushOpFun{\stackstr, \langle_\idxi, \stackstr\langle_\idxi}$ while $\rangle_\idxi$ corresponds to a pop operation $\popOpFun{\stackstr \langle_\idxi, \rangle_\idxi, \stackstr}$ for all $\idxi \in \NTo{\nBracketTypes}$.\footnote{This is true because the input-to-stack-symbol function $\funcs$ here is taken to be the identity function.}
    Since popping can be performed by shifting all entries on the stack one position up while pushing can be performed by shifting all entries one position down (before inserting a new symbol), the \bpdaAcr modeling a $\boundedDyckkm$ language is stack-affine and $2$-varied with the partition $\alphabet = \popAlphabet \sqcup \pushAlphabet$ where $\popAlphabet = \set{\rangle_\idxi \mid \idxi \in \NTo{\nBracketTypes}}$, $\pushAlphabet = \set{\langle_\idxi \mid \idxi \in \NTo{\nBracketTypes}}$.
    Additionally, since each input symbol modifies the stack in a deterministic way---either it is discarded (in case of popping) or is added to the top position of the stack (in case of pushing)---the \bpdaAcr is $\alphabet$-determined.
    Moreover, since the stack only ever needs to store the $\nBracketTypes$ \emph{opening} brackets, only $\ceil{\log_2{\left(\nBracketTypes + 1\right)}}$ bits are needed to represent each entry.
    Altogether, this means that $\boundedDyckkm$ LMs are efficiently representable by RNN LMs of size $\hiddDim = 2 \dyckMaxDepth \ceil{\log_2{\left(\nBracketTypes + 1\right)}}$.
\end{proof}

Note that the construction presented by \citet{hewitt-etal-2020-rnns} results in RNNs of size $2 \cdot 2 \dyckMaxDepth \ceil{\log_2{\left(\nBracketTypes + 1\right)}}$.
This is because they rely on stack representations that contain the \emph{complements} of the symbol encodings.
This is not strictly required, which is interesting since \citet{hewitt-etal-2020-rnns} note a difference in the constant factor between Elman RNNs and LSTMs.
Our construction does away with this difference; see \cref{app:proofs} for details.

\begin{proposition}
    Representation-compatible \ngram LMs are $1$-efficiently representable with $\stackBound = \ngr - 1$ and $\stackalphabet = \alphabet$.
\end{proposition}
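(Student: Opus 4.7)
The plan is to apply \cref{thm:efficiently-representable-bpdas} with $K = 1$ after constructing a natural \bpdaAcr model of the \ngram LM. The stack of bound $\ngr - 1$ will store exactly the last $\ngr - 1$ symbols read, which by \cref{def:ngram} are precisely the information needed to parameterize the next-symbol distribution.

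Concretely, I would take $\stackalphabet = \alphabet$, $\stackBound = \ngr - 1$, initial configuration $\stackstr_0 = \placeholderSym \cdots \placeholderSym$, and the deterministic transition $\stackModFun{\stacksym_1 \cdots \stacksym_{\ngr-1}, \sym, \stacksym_2 \cdots \stacksym_{\ngr-1} \sym} = \pLNSMFun{\sym}{\stacksym_1 \cdots \stacksym_{\ngr-1}}$ with $\finalfFun{\stackstr} = \pLNSMFun{\eos}{\stackstr}$. The transition that discards the bottom symbol and appends $\sym$ on top clearly maintains the invariant that the stack always holds the last $\ngr - 1$ symbols read, so weak equivalence with the \ngram LM is immediate. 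Representation compatibility of the resulting \bpdaAcr is assumed by hypothesis.

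The core step is to factorize this ``shift-and-append'' update as $\alphabetDeterminedf \circ \kVariedf$ in the form required by \cref{thm:efficiently-representable-bpdas}. I would let $\kVariedf$ be the single symbol-agnostic map that drops the bottom-most symbol and frees the top slot by inserting $\placeholderSym$; this makes $\kVariedf$ trivially $1$-varied with partition $\alphabet_1 = \alphabet$. In the vectorial representation of \cref{eq:vectorial-representation}, $\kVariedf$ acts as the block-shift matrix that moves each $\logStackN$-bit block one position and zeros out the top block, so it is stack-affine with zero bias. I would then let $\alphabetDeterminedf$ write $\sym$ into the freed top slot: taking $\funcs \colon \alphabet \to \stackalphabet$ to be the identity and setting $\sJ^\sym_1 = \set{1, \ldots, \stackBound - 1}$, $\sJ^\sym_2 = \set{\stackBound}$, $\sJ^\sym_3 = \emptyset$ for every $\sym \in \alphabet$, the resulting function is manifestly $\alphabet$-determined.

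With these ingredients and $K = 1$, \cref{thm:efficiently-representable-bpdas} immediately produces a weakly equivalent RNN LM of size $\stackBound \ceil{\log_2 \iotastacknsymbols} = (\ngr - 1)\ceil{\log_2 (\nsymbols + 1)}$, establishing $1$-efficient representability. There is no substantive obstacle; the only point warranting a moment's care is that the initial all-placeholder configuration and partially-filled stacks encountered during the first $\ngr - 1$ steps are handled uniformly by the same shift-then-write decomposition, which holds because $\binEncFun{\placeholderSym} = \zero_{\logStackN}$ makes the block-shift behave correctly on placeholder-padded stack vectors.
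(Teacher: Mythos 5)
Your proposal is correct and follows essentially the same route as the paper's proof: store the last $\ngr-1$ symbols on the bounded stack, realize the update as a symbol-agnostic downward shift (a single stack-affine, hence $1$-varied, map) followed by the $\alphabet$-determined insertion of the new symbol at the top, and invoke \cref{thm:efficiently-representable-bpdas} with $K=1$ to get size $(\ngr-1)\ceil{\log_2(\nsymbols+1)}$. Your explicit choice of $\funcs$, the partitions $\sJ^\sym_1,\sJ^\sym_2,\sJ^\sym_3$, and the remark on placeholder-padded prefixes merely spell out details the paper leaves implicit.
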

\begin{proof}
    Representing an \ngram LM requires computing the probability of every symbol given the previous $\ngr - 1$ symbols.\footnote{The prefixes at the beginning of the string have to be appropriately padded.}
    This can be performed by, at step $\tstep$, \begin{enumerate*}[label=\textit{(\arabic*)}]
        \item storing the previous $\ngr - 1$ symbols in the bounded stack of size $\stackBound = \ngr - 1$ and
        \item checking the probability of the symbol $\symt$ given the $\ngr - 1$ stored symbols.
    \end{enumerate*}
    The required updates to the bounded stack can easily be performed by the stack-affine operation of shifting all symbols one position downward.
    Since \emph{all} symbols perform that action, the function is $1$-varied.
    Furthermore, since each input symbol induces the same update to the stack---the insertion of the symbol at the top of the stack---the \bpdaAcr is $\alphabet$-determined.
    This makes \ngram LMs efficiently representable by RNN LMs of size $\hiddDim = \left(\ngr - 1\right) \ceil{\log_2{\left(\nsymbols + 1\right)}}$.
\end{proof}

\section{Discussion} \label{sec:discussion}

This work was motivated by the question of what classes of LMs beyond those over $\boundedDyck{\nBracketTypes}{\dyckMaxDepth}$ can be efficiently represented by RNN LMs, a generalization of an open question posed by \citet{hewitt-etal-2020-rnns}.
We address this with \cref{thm:efficiently-representable-bpdas}, whose implications we discuss next.

\paragraph{Analyzing LMs with general models of computation.}
This work puts the results by \citet{hewitt-etal-2020-rnns} in a broader context of studying the representational capacity of RNN LMs not limited to human language phenomena.
Restricting the analysis to isolated phenomena might not provide a holistic understanding of the model, its capabilities, and the upper bounds of its representational capacity.
General models of computation provide a framework for such holistic analysis; besides being able to provide concrete lower and upper bounds on the (efficient) representational capacity, the thorough understanding of their relationship to human language also provides apt insights into the model's linguistic capabilities.
For example, the simulation of \ngram LMs shows that RNNs can efficiently represent representation-compatible \emph{strictly local} LMs \citep{Jager2012-kv}, a simple and well-understood class of LMs.
This provides a concrete (albeit loose) lower bound on the efficient representational capacity of RNNs.
Further, we directly study the efficient \emph{probabilistic} representational capacity of RNN LMs rather than the binary acceptance of strings.
This allows for a more natural and immediate connection between the inherently probabilistic neural LMs and probabilistic formal models of computation such as \bpdaAcr{}s.

\paragraph{Inductive biases of RNN LMs.}
Understanding neural LMs in terms of the classes of LMs they can efficiently represent allows us to reason about their inductive biases.
When identifying the best model to explain the data, we suspect an RNN would prefer to learn simple representations that still effectively capture the underlying patterns, rather than more complex ones.
We note that this is a form of inductive bias inherent to the model architecture; we do not address other defining aspects of inductive biases, such as the learning procedure itself.
Understanding such inductive biases can then, as argued by \citet{hewitt-etal-2020-rnns}, lead to architectural improvements.
One can, for example, design better architectures or training procedures that exploit these inductive biases, enforce them, or loosen them if they are too restrictive.
In this light, \cref{thm:efficiently-representable-bpdas} substantiates that there is nothing inherent in RNN LMs that biases them towards hierarchical languages since non-hierarchical ones can be modeled just as efficiently.
Encouraging RNNs to learn and model cognitively plausible mechanisms for modeling language might, therefore, require us to augment them with additional mechanisms that \emph{explicitly} model hierarchical structure, such as a stack \citep{dusell2023surprising}.
\cref{thm:efficiently-representable-bpdas} also indicates that hierarchical languages might not be the most appropriate playground for studying the inductive biases of RNNs.
For example, isolating the hidden state recurrence to the $\popOp$ and $\pushOp$ operations disregards that, unlike a stack, an RNN can look at and modify the \emph{entire} hidden state when performing the update; $\popOp$ and $\pushOp$ operations are limited to modifying of the top of the stack.

\paragraph{Inductive biases and learnability.}
A defining aspect of an inductive bias is its effect on the \emph{learning} behavior.
While we do not discuss the learnability of bounded stack LMs by RNN LMs, we underscore the importance of this factor for a full understanding of inductive biases.
Both theoretical and empirical insights are required; they provide an exciting avenue for future work, one which we see as deserving of its own treatment.
More broadly, the inductive biases of a particular neural model rely on various aspects beyond the architecture, including the learning objective, the training algorithm, and features of the training data such as its ordering and size.
With this in mind, we note that our results in no way suggest that RNNs are any \emph{worse} at modeling hierarchical languages than non-hierarchical ones.
The results merely provide a first step towards a more thorough understanding of the inductive biases and suggest that studies striving to understand RNNs' learning behavior should look beyond hierarchical languages.

\paragraph{On the connection to human language.}
Our paper was motivated by the question of whether RNNs can efficiently represent languages beyond $\boundedDyck{\nBracketTypes}{\dyckMaxDepth}$, which models hierarchical structures prominent in human language.
Interestingly, our exploration of a broader class of efficiently representable LMs revealed that \ngram LMs, another class of models useful for studying human language processing, are also efficiently representable \citep{bickel-etal-2005-predicting,SHAIN2020107307,wang2024computational}.
This suggests a compelling interpretation: While RNNs may not naturally prefer hierarchical languages, our extended framework associates them with a more extensive range of human-relevant languages.
This opens new avenues for understanding the inductive biases of RNNs and their connection to human language, encouraging further theoretical and empirical studies on the specific facets of human language that efficiently representable \bpdaAcr{}s can represent and that RNNs are adept at modeling.

\paragraph{A new interpretation of Elman RNN recurrence.}
As detailed by the construction in the proof of \cref{thm:efficiently-representable-bpdas}, the core principle behind the efficient representation lies in the utilization of different parts of the hidden state as placeholders for relevant symbols that have occurred in the context; see also \cref{fig:figure-1}.
This suggests that a natural interpretation of the languages efficiently representable by Elman RNNs: Those recognized by an automaton that keeps a memory of a fixed number of elements that have occurred in the string so far.

\paragraph{Relation to other classes of formal languages.}
Striving to make the results as general as possible motivates the connection of bounded stack languages to other well-understood classes of languages.
Since bounded stack languages are a particular class of finite-state languages, a natural question is how they relate to the existing \defn{sub-regular} languages.
Those have in the past been connected to various aspects of human language \citep{Jager2012-kv} and the representational power of convolutional neural LMs \citep{merrill-2019-sequential} and transformers \citep{yao-etal-2021-self}.
At first glance, efficiently representable \bpdaAcr{}s do not lend themselves to a natural characterization in terms of known classes of sub-regular languages, but we plan on investigating this further in future work.

\section{Conclusion}
We build on \citeposs{hewitt-etal-2020-rnns} results and show that RNNs can efficiently represent a more general class of LMs than those over bounded Dyck languages.
Concretely, we introduce bounded stack LMs as LMs defined by automata that keep $\stackBound$ selected symbols that have occurred in the string so far and update the memory using simple update mechanisms.
We show that instances of such LMs can be represented by RNN LMs in optimal space.
This provides a step towards a more holistic grasp of the inductive biases of RNN LMs.

\section*{Limitations}
We conclude by discussing some limitations of our theoretical investigation and point out some ways these limitations can be addressed.
We first touch on the universality of the result implied by \cref{thm:efficiently-representable-bpdas}.
While we provide a new, more general, class of languages efficiently representable by RNNs and discuss the implications of this result, we do not provide an \emph{exact} characterization of RNNs' efficient representational capacity.
In other words, we do not provide tight lower and upper bounds on the complexity of the languages that can be efficiently represented by RNNs.
The lower bound of strictly local languages that can be encoded using parameter sharing is relatively loose ($\boundedDyckkm$ languages are much more complex than strictly local languages) and the upper bound remains evasive.
This is because we do not characterize the relation of bounded stack languages to other classes of languages, such as sub-regular languages.
Determining precise bounds is further complicated by the unavoidable fact that the efficient representational capacity of RNNs also depends on the parameterization of the specific probability distributions formal models of computation can represent, as made clear by \cref{thm:impossibility}.
Establishing more concrete bounds is therefore a challenging open problem that is left for future work.

We note that all our results are also specifically tailored to Elman RNNs with the specific update rule from \cref{eq:elman-update-rule}.
However, the results naturally generalize to the LSTM architecture in the same way that \citeposs{hewitt-etal-2020-rnns} original constructions do.\looseness=-1

\section*{Ethics Statement}
The paper provides a way to theoretically analyze language models.
To the best of the authors' knowledge, this paper has no ethical implications.\looseness=-1

\section*{Acknowledgements}
Ryan Cotterell acknowledges support from the Swiss National Science
Foundation (SNSF) as part of the ``The Forgotten Role of Inductive Bias in Interpretability'' project.
Anej Svete is supported by the ETH AI Center Doctoral Fellowship.
We thank the reviewers for their insightful comments and suggestions.

\bibliography{anthology,custom}

\begin{thebibliography}{36}
\expandafter\ifx\csname natexlab\endcsname\relax\def\natexlab#1{#1}\fi

\bibitem[{Abney et~al.(1999)Abney, McAllester, and Pereira}]{abney-etal-1999-relating}
Steven Abney, David McAllester, and Fernando Pereira. 1999.
\newblock \href {https://doi.org/10.3115/1034678.1034759} {Relating probabilistic grammars and automata}.
\newblock In \emph{Proceedings of the 37th Annual Meeting of the Association for Computational Linguistics}, pages 542--549, College Park, Maryland, USA. Association for Computational Linguistics.

\bibitem[{Bickel et~al.(2005)Bickel, Haider, and Scheffer}]{bickel-etal-2005-predicting}
Steffen Bickel, Peter Haider, and Tobias Scheffer. 2005.
\newblock \href {https://aclanthology.org/H05-1025} {Predicting sentences using n-gram language models}.
\newblock In \emph{Proceedings of Human Language Technology Conference and Conference on Empirical Methods in Natural Language Processing}, pages 193--200, Vancouver, British Columbia, Canada. Association for Computational Linguistics.

\bibitem[{Borenstein et~al.(2024)Borenstein, Svete, Chan, Valvoda, Nowak, Augenstein, Chodroff, and Cotterell}]{borenstein-etal-2024-what}
Nadav Borenstein, Anej Svete, Robin Shing~Moon Chan, Josef Valvoda, Franz Nowak, Isabelle Augenstein, Eleanor Chodroff, and Ryan Cotterell. 2024.
\newblock What languages are easy to language-model? a perspective from learning probabilistic regular languages.
\newblock In \emph{Proceedings of the 62nd Annual Meeting of the Association for Computational Linguistics (Volume 1: Long Papers)}, Bangkok, Thailand. Association for Computational Linguistics.

\bibitem[{Chang and McCallum(2022)}]{chang-mccallum-2022-softmax}
Haw-Shiuan Chang and Andrew McCallum. 2022.
\newblock \href {https://doi.org/10.18653/v1/2022.acl-long.554} {Softmax bottleneck makes language models unable to represent multi-mode word distributions}.
\newblock In \emph{Proceedings of the 60th Annual Meeting of the Association for Computational Linguistics (Volume 1: Long Papers)}, pages 8048--8073, Dublin, Ireland. Association for Computational Linguistics.

\bibitem[{Cotterell et~al.(2024)Cotterell, Svete, Meister, Liu, and Du}]{cotterell2024formal}
Ryan Cotterell, Anej Svete, Clara Meister, Tianyu Liu, and Li~Du. 2024.
\newblock \href {http://arxiv.org/abs/2311.04329} {Formal aspects of language modeling}.
\newblock \emph{arXiv preprint 2311.04329}.

\bibitem[{Dewdney(1977)}]{Dewdney1977}
A.~K. Dewdney. 1977.
\newblock \href {https://ci.nii.ac.jp/ncid/BA16515546?l=en} {Threshold matrices and the state assignment problem for neural nets}.
\newblock In \emph{Proceedings of the 8th SouthEastern Conference on Combinatorics, Graph Theory and Computing}, pages 227--245, Baton Rouge, La, USA.

\bibitem[{Du et~al.(2023)Du, Torroba~Hennigen, Pimentel, Meister, Eisner, and Cotterell}]{du-etal-2023-measure}
Li~Du, Lucas Torroba~Hennigen, Tiago Pimentel, Clara Meister, Jason Eisner, and Ryan Cotterell. 2023.
\newblock \href {https://doi.org/10.18653/v1/2023.acl-long.543} {A measure-theoretic characterization of tight language models}.
\newblock In \emph{Proceedings of the 61st Annual Meeting of the Association for Computational Linguistics (Volume 1: Long Papers)}, pages 9744--9770, Toronto, Canada. Association for Computational Linguistics.

\bibitem[{DuSell and Chiang(2023)}]{dusell2023surprising}
Brian DuSell and David Chiang. 2023.
\newblock \href {http://arxiv.org/abs/2210.01343} {The surprising computational power of nondeterministic stack {RNN}s}.
\newblock \emph{arXiv preprint 2210.01343}.

\bibitem[{Elman(1990)}]{Elman1990}
Jeffrey~L. Elman. 1990.
\newblock \href {https://doi.org/https://doi.org/10.1207/s15516709cog1402\_1} {Finding structure in time}.
\newblock \emph{Cognitive Science}, 14(2):179--211.

\bibitem[{Hao et~al.(2018)Hao, Merrill, Angluin, Frank, Amsel, Benz, and Mendelsohn}]{hao-etal-2018-context}
Yiding Hao, William Merrill, Dana Angluin, Robert Frank, Noah Amsel, Andrew Benz, and Simon Mendelsohn. 2018.
\newblock \href {https://doi.org/10.18653/v1/W18-5433} {Context-free transductions with neural stacks}.
\newblock In \emph{Proceedings of the 2018 {EMNLP} Workshop {B}lackbox{NLP}: Analyzing and Interpreting Neural Networks for {NLP}}, pages 306--315, Brussels, Belgium. Association for Computational Linguistics.

\bibitem[{Hewitt et~al.(2020)Hewitt, Hahn, Ganguli, Liang, and Manning}]{hewitt-etal-2020-rnns}
John Hewitt, Michael Hahn, Surya Ganguli, Percy Liang, and Christopher~D. Manning. 2020.
\newblock \href {https://doi.org/10.18653/v1/2020.emnlp-main.156} {{RNN}s can generate bounded hierarchical languages with optimal memory}.
\newblock In \emph{Proceedings of the 2020 Conference on Empirical Methods in Natural Language Processing (EMNLP)}, pages 1978--2010, Online. Association for Computational Linguistics.

\bibitem[{Hochreiter and Schmidhuber(1997)}]{10.1162/neco.1997.9.8.1735}
Sepp Hochreiter and Jürgen Schmidhuber. 1997.
\newblock \href {https://doi.org/10.1162/neco.1997.9.8.1735} {Long short-term memory}.
\newblock \emph{Neural Computation}, 9(8):1735--1780.

\bibitem[{Indyk(1995)}]{Indyk95}
P.~Indyk. 1995.
\newblock \href {https://dl.acm.org/doi/10.1145/103516.103523} {Optimal simulation of automata by neural nets}.
\newblock In \emph{STACS 95}, pages 337--348, Berlin, Heidelberg. Springer Berlin Heidelberg.

\bibitem[{J{\"a}ger and Rogers(2012)}]{Jager2012-kv}
Gerhard J{\"a}ger and James Rogers. 2012.
\newblock \href {https://doi.org/10.1098/rstb.2012.0077} {Formal language theory: {R}efining the {C}homsky hierarchy}.
\newblock \emph{Philos Trans R Soc Lond B Biol Sci}, 367(1598):1956--1970.

\bibitem[{Kleene(1956)}]{Kleene1956}
S.~C. Kleene. 1956.
\newblock \href {https://doi.org/doi:10.1515/9781400882618-002} {Representation of events in nerve nets and finite automata}.
\newblock In C.~E. Shannon and J.~McCarthy, editors, \emph{Automata Studies. (AM-34), Volume 34}, pages 3--42. Princeton University Press, Princeton.

\bibitem[{Korsky and Berwick(2019)}]{DBLP:journals/corr/abs-1906-06349}
Samuel~A. Korsky and Robert~C. Berwick. 2019.
\newblock \href {http://arxiv.org/abs/1906.06349} {On the computational power of {RNNs}}.
\newblock \emph{CoRR}, abs/1906.06349.

\bibitem[{McCulloch and Pitts(1943)}]{McCulloch1943}
Warren~S. McCulloch and Walter Pitts. 1943.
\newblock \href {https://doi.org/10.1007/BF02478259} {A logical calculus of the ideas immanent in nervous activity}.
\newblock \emph{The bulletin of mathematical biophysics}, 5(4):115--133.

\bibitem[{Merrill(2019)}]{merrill-2019-sequential}
William Merrill. 2019.
\newblock \href {https://doi.org/10.18653/v1/W19-3901} {Sequential neural networks as automata}.
\newblock In \emph{Proceedings of the Workshop on Deep Learning and Formal Languages: Building Bridges}, pages 1--13, Florence. Association for Computational Linguistics.

\bibitem[{Merrill(2023)}]{MerrillBlackBox}
William Merrill. 2023.
\newblock \href {https://doi.org/10.1007/978-3-031-33264-7_1} {Formal languages and the {NLP} black box}.
\newblock In \emph{Developments in Language Theory: 27th International Conference, DLT 2023, Ume\r{a}, Sweden, June 12–16, 2023, Proceedings}, page 1–8, Berlin, Heidelberg. Springer-Verlag.

\bibitem[{Merrill et~al.(2022)Merrill, Sabharwal, and Smith}]{merrill-etal-2022-saturated}
William Merrill, Ashish Sabharwal, and Noah~A. Smith. 2022.
\newblock \href {https://doi.org/10.1162/tacl_a_00493} {Saturated transformers are constant-depth threshold circuits}.
\newblock \emph{Transactions of the Association for Computational Linguistics}, 10:843--856.

\bibitem[{Merrill and Tsilivis(2022)}]{merrill2022extracting}
William Merrill and Nikolaos Tsilivis. 2022.
\newblock \href {http://arxiv.org/abs/2201.12451} {Extracting finite automata from {RNNs} using state merging}.
\newblock \emph{arXiv preprint arXiv:2201.12451}.

\bibitem[{Merrill et~al.(2020)Merrill, Weiss, Goldberg, Schwartz, Smith, and Yahav}]{merrill-etal-2020-formal}
William Merrill, Gail Weiss, Yoav Goldberg, Roy Schwartz, Noah~A. Smith, and Eran Yahav. 2020.
\newblock \href {https://doi.org/10.18653/v1/2020.acl-main.43} {A formal hierarchy of {RNN} architectures}.
\newblock In \emph{Proceedings of the 58th Annual Meeting of the Association for Computational Linguistics}, pages 443--459, Online. Association for Computational Linguistics.

\bibitem[{Minsky(1954)}]{Minsky1954}
Marvin~Lee Minsky. 1954.
\newblock \href {https://www.proquest.com/docview/301998727?parentSessionId=tt6FtxLC54LAeBn4iwlPCmf7YomIxumldfOoWkNvHtM\%3D} {\emph{Neural Nets and the Brain Model Problem}}.
\newblock Ph.D. thesis, Princeton University.

\bibitem[{Orvieto et~al.(2023)Orvieto, Smith, Gu, Fernando, Gulcehre, Pascanu, and De}]{orvieto2023resurrecting}
Antonio Orvieto, Samuel~L Smith, Albert Gu, Anushan Fernando, Caglar Gulcehre, Razvan Pascanu, and Soham De. 2023.
\newblock \href {http://arxiv.org/abs/2303.06349} {Resurrecting recurrent neural networks for long sequences}.

\bibitem[{Peng et~al.(2023)Peng, Alcaide, Anthony, Albalak, Arcadinho, Cao, Cheng, Chung, Grella, GV, He, Hou, Kazienko, Kocon, Kong, Koptyra, Lau, Mantri, Mom, Saito, Tang, Wang, Wind, Wozniak, Zhang, Zhang, Zhao, Zhou, Zhu, and Zhu}]{peng2023rwkv}
Bo~Peng, Eric Alcaide, Quentin Anthony, Alon Albalak, Samuel Arcadinho, Huanqi Cao, Xin Cheng, Michael Chung, Matteo Grella, Kranthi~Kiran GV, Xuzheng He, Haowen Hou, Przemyslaw Kazienko, Jan Kocon, Jiaming Kong, Bartlomiej Koptyra, Hayden Lau, Krishna Sri~Ipsit Mantri, Ferdinand Mom, Atsushi Saito, Xiangru Tang, Bolun Wang, Johan~S. Wind, Stansilaw Wozniak, Ruichong Zhang, Zhenyuan Zhang, Qihang Zhao, Peng Zhou, Jian Zhu, and Rui-Jie Zhu. 2023.
\newblock \href {http://arxiv.org/abs/2305.13048} {{RWKV}: {Reinventing} {RNNs} for the transformer era}.
\newblock \emph{arXiv preprint arXiv:2305.13048}.

\bibitem[{Qiu et~al.(2020)Qiu, Sun, Xu, Shao, Dai, and Huang}]{qiu2020}
XiPeng Qiu, TianXiang Sun, YiGe Xu, YunFan Shao, Ning Dai, and XuanJing Huang. 2020.
\newblock \href {https://doi.org/10.1007/s11431-020-1647-3} {Pre-trained models for natural language processing: {A} survey}.
\newblock \emph{Science China Technological Sciences}, 63(10):1872--1897.

\bibitem[{Shain et~al.(2020)Shain, Blank, {van Schijndel}, Schuler, and Fedorenko}]{SHAIN2020107307}
Cory Shain, Idan~Asher Blank, Marten {van Schijndel}, William Schuler, and Evelina Fedorenko. 2020.
\newblock \href {https://doi.org/https://doi.org/10.1016/j.neuropsychologia.2019.107307} {fmri reveals language-specific predictive coding during naturalistic sentence comprehension}.
\newblock \emph{Neuropsychologia}, 138:107307.

\bibitem[{Siegelmann and Sontag(1992)}]{Siegelmann1992OnTC}
Hava~T. Siegelmann and Eduardo~D. Sontag. 1992.
\newblock \href {https://doi.org/10.1145/130385.130432} {On the computational power of neural nets}.
\newblock In \emph{Proceedings of the Fifth Annual Workshop on Computational Learning Theory}, COLT '92, page 440–449, New York, NY, USA. Association for Computing Machinery.

\bibitem[{Strobl et~al.(2023)Strobl, Merrill, Weiss, Chiang, and Angluin}]{strobl2023transformers}
Lena Strobl, William Merrill, Gail Weiss, David Chiang, and Dana Angluin. 2023.
\newblock \href {http://arxiv.org/abs/2311.00208} {Transformers as recognizers of formal languages: A survey on expressivity}.
\newblock \emph{arXiv preprint arXiv:2311.00208}.

\bibitem[{Svete and Cotterell(2023{\natexlab{a}})}]{svete2023efficient}
Anej Svete and Ryan Cotterell. 2023{\natexlab{a}}.
\newblock \href {http://arxiv.org/abs/2310.05161v3} {Efficiently representing finite-state automata with recurrent neural networks}.
\newblock \emph{arXiv preprint arXiv:2310.05161v3}.

\bibitem[{Svete and Cotterell(2023{\natexlab{b}})}]{svete2023recurrent}
Anej Svete and Ryan Cotterell. 2023{\natexlab{b}}.
\newblock \href {http://arxiv.org/abs/2310.05161} {Recurrent neural language models as probabilistic finite-state automata}.
\newblock \emph{arXiv preprint arXiv:2310.05161}.

\bibitem[{Svete et~al.(2024)Svete, Nowak, Sahabdeen, and Cotterell}]{svete2024lower}
Anej Svete, Franz Nowak, Anisha~Mohamed Sahabdeen, and Ryan Cotterell. 2024.
\newblock \href {http://arxiv.org/abs/2405.19222} {Lower bounds on the expressivity of recurrent neural language models}.

\bibitem[{Wang et~al.(2024)Wang, Sun, Zhang, Lin, Moens, and Zong}]{wang2024computational}
Shaonan Wang, Jingyuan Sun, Yunhao Zhang, Nan Lin, Marie-Francine Moens, and Chengqing Zong. 2024.
\newblock \href {http://arxiv.org/abs/2403.13368} {Computational models to study language processing in the human brain: A survey}.

\bibitem[{Yang et~al.(2018)Yang, Dai, Salakhutdinov, and Cohen}]{yang2018breaking}
Zhilin Yang, Zihang Dai, Ruslan Salakhutdinov, and William~W. Cohen. 2018.
\newblock \href {https://openreview.net/forum?id=HkwZSG-CZ} {Breaking the softmax bottleneck: A high-rank {RNN} language model}.
\newblock In \emph{International Conference on Learning Representations}.

\bibitem[{Yao et~al.(2021)Yao, Peng, Papadimitriou, and Narasimhan}]{yao-etal-2021-self}
Shunyu Yao, Binghui Peng, Christos Papadimitriou, and Karthik Narasimhan. 2021.
\newblock \href {https://doi.org/10.18653/v1/2021.acl-long.292} {Self-attention networks can process bounded hierarchical languages}.
\newblock In \emph{Proceedings of the 59th Annual Meeting of the Association for Computational Linguistics and the 11th International Joint Conference on Natural Language Processing (Volume 1: Long Papers)}, pages 3770--3785, Online. Association for Computational Linguistics.

\bibitem[{Zhou et~al.(2023)Zhou, Jiang, Cui, Wang, Xiao, Hou, Cotterell, and Sachan}]{zhou2023recurrentgpt}
Wangchunshu Zhou, Yuchen~Eleanor Jiang, Peng Cui, Tiannan Wang, Zhenxin Xiao, Yifan Hou, Ryan Cotterell, and Mrinmaya Sachan. 2023.
\newblock \href {http://arxiv.org/abs/2305.13304} {{RecurrentGPT}: {Interactive} generation of (arbitrarily) long text}.
\newblock \emph{arXiv preprint arXiv:2305.13304}.

\end{thebibliography}
\bibliographystyle{acl_natbib}

\newpage
\onecolumn

\appendix

\section{Related Work} \label{sec:related-work}

This work is part of the ongoing effort to better apprehend the theoretical representational capacity of LMs, particularly those implemented with RNNs \cite{MerrillBlackBox}.
The study of the representational capacity of RNNs has a long history \citep[see, e.g.,][\textit{inter alia}]{McCulloch1943,Minsky1954,Kleene1956,Siegelmann1992OnTC,hao-etal-2018-context,DBLP:journals/corr/abs-1906-06349,merrill-2019-sequential,merrill-etal-2020-formal,hewitt-etal-2020-rnns,merrill-etal-2022-saturated,merrill2022extracting,svete2023recurrent,svete2024lower}.
\citet{Minsky1954}, for example, showed that binary-activated RNNs are equivalent to (deterministic) finite-state automata: They can represent any finite-state language and can be simulated by deterministic finite-state automata.
Minsky's construction of an RNN simulating a general deterministic FSA was extended to the probabilistic---language modeling---setting by \citet{svete2023recurrent}.\footnote{Unlike binary FSAs, where non-determinism does not add any expressive power, non-deterministic probabilistic FSAs are more expressive than their deterministic counterparts. This is why, although \citeposs{Minsky1954} construction only considers deterministic FSAs, it shows the equivalence of RNNs to all FSAs. The same cannot be said about the probabilistic setting, where the equivalence holds only for the deterministic case. The relationship between RNN LMs and general probabilistic FSAs is addressed in \citet{svete2024lower}.}

Recently, increased interest has been put on the \emph{efficient} representational capacity of LMs and their inductive biases.
To emulate a deterministic probabilistic FSA $\wfsa$ with states $\states$ over the alphabet $\alphabet$,\footnote{See \cref{sec:pfsas} for a formal definition of a PFSA.} \citeposs{Minsky1954} construction requires an RNN of size $\bigOFun{\nsymbols\nstates}$.
This was improved by \citet{Dewdney1977}, who established that a general FSA can be simulated by an RNN of size $\bigOFun{\nsymbols \nstates^{3/4}}$, and further by \citet{Indyk95}, who lowered this bound to $\bigOFun{\nsymbols \sqrt{\nstates}}$.
The latter was also shown to be optimal for general (adversarial) FSAs.
The starting point of this work, \citet{hewitt-etal-2020-rnns}, was the first to show the possibility of exponentially compressing a specific family of finite-state languages, namely the bounded Dyck languages.
This provides an exponential improvement over the general FSA simulation results.
In this work, we generalize this result to a more general class of languages, aiming to gain a more thorough understanding of the mechanisms that allow exponential compression with RNNs.
As shown by \cref{thm:efficiently-representable-bpdas}, a possible candidate for the mechanism enabling exponential compression might be the notion of a bounded stack.

\subsection{Relation to \citeposs{hewitt-etal-2020-rnns} Notion of Language Recognition} \label{sec:relation-to-hewitt-et-al}
A crucial difference in our approach to that of \citet{hewitt-etal-2020-rnns} is the direct treatment of \emph{language models} rather than binary languages.
The notion of recognition of binary languages by RNNs (or any other neural LM) can be somewhat tricky, since the LM assigns a probability to each string in the language, not just a binary decision of whether the string is in the language.
To reconcile the discrepancy between the discrete nature of formal languages and the probabilistic nature of (RNN) LMs, \citet{hewitt-etal-2020-rnns} define the so-called truncated recognition of a formal language.
\begin{definition} \label{def:truncated-recognition}
    Let $\alphabet$ be an alphabet and $\thrPar \in \Rplus$.
    An LM $\pLM$ is said to \defn{recognize} the language $\lang \subseteq \kleene{\alphabet}$ with the \defn{threshold} $\thrPar$ if for every $\str = \sym_1 \ldots \sym_\strlen \in \lang$, it holds for all $\tstep \in \NTo{\strlen}$ that
    \begin{equation}
        \pLM\left(\symt \mid \strlt \right) > \alpha \quad \text{ and } \quad \pLM\left(\eos \mid \str \right) > \alpha.
    \end{equation}
\end{definition}
In words, a language $\lang$ is recognized by $\pLM$ if $\pLM$ assigns sufficiently high probability to all allowed continuations of (sub)strings in $\lang$.\footnote{In contrast to $\pLM\left(\str\right)$, which necessarily diminishes with the string length, the conditional probabilities of valid continuations are bounded from below.}

The notion of truncated recognition allows \citet{hewitt-etal-2020-rnns} to talk about the recognition of binary languages by RNNs and design RNNs efficiently representing bounded Dyck languages.
In contrast to our results, they do not consider the exact probabilities in their constructions---they simply ensure that the probabilities of valid continuations in their constructions are bounded from below by a constant and those of invalid continuations are bounded from above by a constant.
While this results in interesting insights and reusable mechanisms, as we showcase here, it is not clear how to generalize this approach to languages more general than the structured Dyck languages.
Taking into account the exact probabilities allows us to consider a more general class of languages, for example, \ngram LMs.
When applicable, thresholding the exact probabilities from our construction in the manner described by \cref{def:truncated-recognition} allows us to reconstruct the construction of \citet{hewitt-etal-2020-rnns}, meaning that our results provide a convenient generalization of theirs.

\section{Finite-state Automata and Bounded Stack Languages} \label{app:bounded-stack-automata}
The main part of the paper discussed the connection of LMs representable by \bpdaAcr{}s and RNNs.
\bpdaAcr LMs, however, are a particular class of finite-state LMs, and we discuss this connection in more detail here.\looseness=-1

\subsection{Probabilistic Finite-state Automata} \label{sec:pfsas}
We begin by more formally defining the notion of probabilistic finite-state automata (\pfsaAcr{}s).
Probabilistic finite-state automata are a well-understood real-time computational model.\looseness=-1
\begin{definition}\label{def:stochastic-wfsa}
    A \defn{probabilistic finite-state automaton} (\pfsaAcr{}) is a 5-tuple $\wfsatuple$ where $\alphabet$ is an alphabet, $\states$ is a finite set of states, $\trans \subseteq \states \times \alphabet \times \R_{\geq 0} \times \states$ is a finite set of weighted transitions
    where we write transitions $\left(\stateq, \sym, w, \stateq^\prime\right) \in \trans$ as $\edge{\stateq}{\sym}{w}{\stateq^\prime}$,\footnote{We further assume a $(\stateq, \sym, \stateq^\prime)$ triple appears in at most \emph{one} element of $\trans$.\looseness=-1}
    and $\initf, \finalf\colon \states \rightarrow \R_{\geq 0}$ are functions that assign each state its initial and final weight, respectively.
    Moreover, for all states $\stateq \in \states$, $\trans, \initf$ and $\finalf$ satisfy $\sum_{\stateq \in \states} \initf\left(\stateq\right) = 1$, and $\sum\limits_{\edge{\stateq}{\sym}{w}{\stateq^\prime} \in \trans} w + \finalf\left(\stateq\right) = 1$.
\end{definition}

We next define some basic concepts.
A \pfsaAcr{} $\automaton = \wfsatuple$ is \defn{deterministic} if $|\set{\stateq \mid \initfFun{\stateq} > 0}| = 1$ and, for every $\stateq \in \states, \sym \in \alphabet$, there is at most one $\stateq^\prime \in \states$ such that $\edge{\stateq}{\sym}{w}{\stateq^\prime} \in \trans$ with $w > 0$.
Any state $\stateq$ where $\initfFun{\stateq}>0$ is called an \defn{initial state}, and if $\finalfFun{\stateq} > 0$, it is called a \defn{final state}.
A \defn{path} $\apath$ of length $\pathlen$ is a sequence of subsequent transitions in $\automaton$, denoted as\looseness=-1
\begin{equation}
    \!\edge{\stateq_1}{\sym_1}{w_1}{\edge{\stateq_2}{\sym_2}{w_2}{\stateq_3} \!\cdots\! \edge{\stateq_{\pathlen}}{\sym_{\pathlen}}{w_{\pathlen}}{\stateq_{\pathlen + 1}}}.
\end{equation}
The \defn{yield} of a path is $\yield\left(\apath\right)\defeq \sym_1 \ldots \sym_{\pathlen}$.
The \defn{prefix weight} $\widetilde{\weight}$ of a path $\apath$ is the product of the transition and initial weights, whereas the \defn{weight} of a path additionally has the final weight multiplied in.
In symbols, this means

\noindent\begin{minipage}{0.49\linewidth}
    \begin{equation} \label{eq:prefix-path-weight}
        \widetilde{\weight}(\apath)\defeq \prod_{\idx = 0}^\pathlen w_\idx,
    \end{equation}
\end{minipage}
\begin{minipage}{0.49\linewidth}
    \begin{equation}
        \weight(\apath)\defeq \prod_{\idx = 0}^{\pathlen+1} w_\idx,
    \end{equation}
\end{minipage}
with $w_0 \defeq \initf(\stateq_1)$ and $w_{\pathlen+1} \defeq \finalf(\stateq_{\pathlen+1})$.
We write $\paths(\automaton)$ for the set of all paths in $\automaton$ and we write $\paths(\automaton, \str)$ for the set of all paths in $\automaton$ with yield $\str$.
The sum of weights of all paths that yield a certain string $\str\in\kleene{\alphabet}$ is called the \defn{stringsum}, given in the notation below
\begin{equation}
    \automaton \left( \str \right) \defeq \sum_{\apath \in \paths\left( \automaton, \str \right) }  \weight \left( \apath \right).
\end{equation}
The stringsum gives the probability of the string $\str$.

\subsection{Bounded-stack Probabilistic Finite-state Automata}

The main text presented bounded stack LMs as LMs defined by \bpdaAcr{}s.
While the specification with respect to bounded stacks is enough to connect them to RNN LMs, we can also investigate weakly equivalent PFSAs, establishing an explicit connection to this well-studied class of computational models.
This relationship is characterized by the following theorem.
\bpdasPfsasThm*
\begin{proof}
    ($\impliedby$). It is easy to see that any \bpdaAcr defines a \pfsaAcr: Since the set of possible bounded stack configurations is finite and $\stackMod$ simply defines transitions between the finitely many configurations, one can think of the \bpdaAcr as defining transitions between the finitely many states represented by the configurations.

    ($\implies$). Let $\wfsa = \wfsatuple$ be a \pfsaAcr.
    We define the weakly equivalent \bpdaAcr $\pda = \left(\alphabet, \stackalphabet, \stackBound, \stackMod, \initf_\pda, \finalf_\pda\right)$ with $\stackalphabet = \states$, $\stackBound = 1$, $\initf_\pda = \initf$, $\finalf_\pda = \finalf$, $\stackModFun{\stateq, \sym, \stateq'} = \trans\left(\stateq, \sym, \stateq'\right)$ for all $\stateq, \stateq' \in \states$ and $\sym \in \alphabet$.
    By noting that there is a trivial bijection between the stack configurations and the states of the \pfsaAcr as well as between the two transition functions, it is easy to see that the \bpdaAcr $\pda$ is weakly equivalent\footnote{In fact, one could also prove strong equivalence.} to the \pfsaAcr $\wfsa$.
\end{proof}
The interpretation of \bpdaAcr LMs in terms of three mechanisms---a stack modification function, an RNN, and a bounded-stack \pfsaAcr---is illustrated in \cref{fig:three-mechaniams}, which shows the three update mechanisms in action on the same string.

\begin{figure}
    \centering

    \begin{tikzpicture}[
        tape node/.style={draw=ETHBlue!80,minimum size=0.85cm,fill=ETHBlue!20},
        attn arrow/.style={-{Latex[length=3mm,width=2mm]},ETHGreen!100},
        comb arrow/.style={-{Latex[length=3mm,width=2mm]},ETHRed!70},
        ]

        \foreach \i/\y in {-4/$\sym_1$,-3/$\sym_2$,-2/$\cdots$,-1/$\sym_{\tstep-4}$,0/$\sym_{\tstep-3}$,1/$\sym_{\tstep-2}$,2/$\sym_{\tstep-1}$,3/$\symt$,4/$\cdots$} {
                \ifnum \i=3
                    \node[tape node,fill=ETHBlue!40] (tape-\i) at (0.85*\i,0) {\footnotesize \y};
                \else
                    \node[tape node,fill=ETHBlue!20] (tape-\i) at (0.85*\i,0) {\footnotesize \y};
                    \ifnum \i>3
                        \node[tape node,fill=ETHBlue!10] (tape-\i) at (0.85*\i,0) {\footnotesize \y};
                    \fi
                \fi
            }

        \node[draw=none] (stack-1) at (-6, 3.5) {$
                \begin{pmatrix}
                    \sym_{\tstep - 1} \\
                    \sym_{\tstep - 4} \\
                    \sym_{2}
                \end{pmatrix}
            $};

        \node[draw=none] (stack-2) at (-3, 3.5) {$
                \begin{pmatrix}
                    \sym_{\tstep}     \\
                    \sym_{\tstep - 1} \\
                    \sym_{\tstep - 4} \\
                \end{pmatrix}
            $};

        \draw[->, ETHBlue, thick] (stack-1.east) to node[above, midway] {$\scriptstyle \pushOp$} (stack-2.west);

        \node[draw=none] (hiddstate-1) at (3, 3.5) {$
                \begin{pmatrix}
                    \binEncFun{\sym_{\tstep - 1}} \\
                    \binEncFun{\sym_{\tstep - 4}} \\
                    \binEncFun{\sym_{2}}
                \end{pmatrix}
            $};

        \node[draw=none] (hiddstate-2) at (8, 3.5) {$
                \begin{pmatrix}
                    \binEncFun{\sym_{\tstep}}     \\
                    \binEncFun{\sym_{\tstep - 1}} \\
                    \binEncFun{\sym_{\tstep - 4}} \\
                \end{pmatrix}
            $};

        \draw[->, ETHBlue, thick] (hiddstate-1.east) to node[above, midway] {$\scriptstyle \heavisideFun{\recMtx \hiddState + \inMtx \onehot{\symt} + \bias}$} (hiddstate-2.west);

        \draw[attn arrow, dashed] (tape--3.north) to[out=90,in=270] (stack-1);
        \draw[attn arrow, dashed] (tape--1.north) to[out=90,in=270] (stack-1);
        \draw[attn arrow, dashed] (tape-2.north) to[out=90,in=270] (stack-1);

        \draw[attn arrow] (tape--1.north) to[out=90,in=270] (stack-2);
        \draw[attn arrow] (tape-2.north) to[out=90,in=270] (stack-2);
        \draw[attn arrow] (tape-3.north) to[out=90,in=270] (stack-2);

        \draw[attn arrow, ETHPetrol, dashed] (tape--3.north) to[out=90,in=270] (hiddstate-1);
        \draw[attn arrow, ETHPetrol, dashed] (tape--1.north) to[out=90,in=270] (hiddstate-1);
        \draw[attn arrow, ETHPetrol, dashed] (tape-2.north) to[out=90,in=270] (hiddstate-1);

        \draw[attn arrow, ETHPetrol] (tape--1.north) to[out=90,in=270] (hiddstate-2);
        \draw[attn arrow, ETHPetrol] (tape-2.north) to[out=90,in=270] (hiddstate-2);
        \draw[attn arrow, ETHPetrol] (tape-3.north) to[out=90,in=270] (hiddstate-2);

        \node[state, fill=ETHBlue!20] (q1) at (-1, -2) {$\stateq$};
        \node[state, fill=ETHBlue!40, right of=q1, xshift=20mm] (q2) {$\stateq^\prime$};
        \draw[transition] (q1) edge[auto] node{ $\symt$ } (q2);

        \draw[decorate, decoration={brace, mirror, amplitude=10pt, raise=3pt, aspect=0.475}, thick] (tape--4.south west) -- (tape-2.south east) node[midway, below=5pt] {};

        \draw[decorate, decoration={brace, mirror, amplitude=10pt, raise=15pt, aspect=0.85}, thick] (tape--4.south west) -- (tape-3.south east) node[midway, below=5pt] {};

        \node[draw=none] at (-4.5, 5) {Bounded stack};
        \node[draw=none] at (5.5, 5) {RNN};
        \node[draw=none] at (0.5, -3) {PFSA};

    \end{tikzpicture}
    \caption{An illustration of how one can think of \bpdaAcr LMs as being represented by three different mechanisms: a \bpdaAcr, a black-box PFSA, and an RNN.}
    \label{fig:three-mechaniams}
\end{figure}
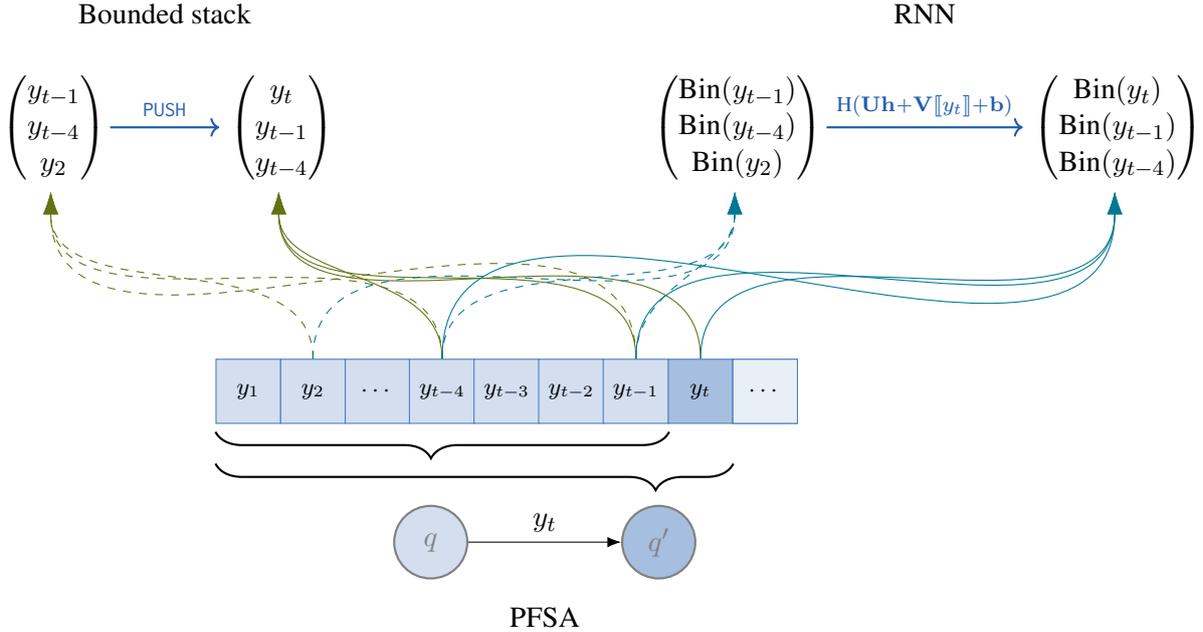

\section{Proofs} \label{app:proofs}
This section contains the proofs of the theorems stated and used in the paper.

\subsection{Emulating Stack Updates} \label{sec:stack-simulation}

This section contains the proof of \cref{thm:efficiently-representable-bpdas}.
In many ways, it resembles the original exposition by \citet{hewitt-etal-2020-rnns} but is presented in our notation and with the additional generality of \bpdaAcr{}s.
Some aspects are also simplified in our framework (for example, we do not have to scale the inputs to the activation functions).

\paragraph{On the use of the Heaviside activation function.}
The proof below relies on the use of the Heaviside activation function. 
As mentioned in \cref{sec:rnns}, due to the relationship between $\ReLU$ and $\heaviside$, the same results hold for $\ReLU$-activated RNNs as well.
More precisely, it is easy to show that 
\begin{equation}
    \heavisideFun{i} = \ReLU\left(i\right) - \ReLU\left(i - 1\right)
\end{equation}
for all $i \in \Z$.\footnote{The restriction to the integers is enough since we are considering finite-precision RNNs.}
All our results therefore map to the setting of $\ReLU$-activated RNNs, but might require hidden states of twice the size to store the results of $\ReLU\left(i\right)$ and $\ReLU\left(i - 1\right)$.\footnote{These two values can then be combined at the next step of the computation before being used as $\heavisideFun{i}$.}
Interestingly, the duplication of the size of the hidden state is \emph{not} required for the same reason as in \citet{hewitt-etal-2020-rnns}; they use hidden states of size $2K \stackBound \logStackN$ (in our notation) to store the logarithmic encodings and their \emph{complements}. 
This is required since it allows for an easier explicit determination of next-symbol probabilities.
We do not require that.
Notice that the binary complement $\overline{\vx}$ of a vector $\vx \in \set{0, 1}^\hiddDim$ is its affine transformation $\overline{\vx} = \one_\hiddDim - \vx$.
Thus, the transformation $\vx \mapsto \overline{\vx}$ can be absorbed into the affine transformation $\outMtx \vx + \outBias$.\footnote{Interestingly, the recognition of bounded Dyck languages does \emph{not} require the duplication of the hidden state size even when we use $\ReLU$ activation function. This is because whenever the network inserts a symbol on the stack (the only place where the Heaviside function behaves differently compared to the $\ReLU$ in the proof of \cref{thm:efficiently-representable-bpdas}), the slot where the symbol is inserted is already empty. Thus, the values of the stack encoding stay $\in \set{0, 1}$, which means that they do not require the clipping performed by the $\ReLU$ function.}

\stackUpdatesTheorem*
\begin{proof}

    The proof of \cref{thm:efficiently-representable-bpdas} requires us to show that the RNN recurrence (cf. \cref{eq:elman-update-rule}) can
    \begin{enumerate*}[label=\textit{(\roman*)}]
        \item implement the update mechanisms implementing the stack update mechanism of the \bpdaAcr and
        \item encode the same next-symbol probabilities as the \bpdaAcr.
    \end{enumerate*}
    We do that by defining the appropriate recurrence and input matrices $\recMtx$ and $\inMtx$, the bias vector $\bias$, and the output parameters $\outMtx$ and $\outBias$.
    We also use one-hot encodings of input symbols $\sym$, which we denote with $\onehot{\sym}$.
    More precisely, for a bijection $h\colon \alphabet \to \NTo{\nsymbols}$, we define the entries of $\onehot{\sym} \in \set{0, 1}^\nsymbols$ as
    \begin{equation}
        \onehot{\sym}_\idxi \defeq \ind{\idxi = h\left(\sym\right)}
    \end{equation}
    for $\idxi \in \NTo{\nsymbols}$

    The condition \cref{eq:mu-structure} can be equivalently expressed as the requirement that (the deterministic) $\pda$ defines the next-stack configuration function $\nextStack$ of the form
    \begin{equation}
        \nextStackFun{\stackstr, \sym} = \alphabetDeterminedf\left(\kVariedf\left(\stackstr, \sym\right), \sym\right)
    \end{equation}
    for a $\alphabet$-determined function $\alphabetDeterminedf$ and a $K$-varied function $\kVariedf$ where all $\kVariedf_k$ are stack-affine.
    Definitionally, it thus holds for all $\kVariedf_k$ that
    \begin{equation}
        \stackVecFun{\kVariedf_k\left(\stackstr, \sym\right)} = \stackVecFun{\kVariedf_k\left(\stackstr\right)} = \mM^k \stackVecFun{\stackstr} + \vv^k
    \end{equation}
    for some matrix $\mM^k \in \R^{\stackBound \logStackN \times \stackBound \logStackN}$ and $\vv^k \in \R^{\stackBound \logStackN}$.
    We now define $\kappa\colon \alphabet \to \NTo{K}$ as the function that maps a symbol $\sym \in \alphabet$ to the index of the set $\sym$ belongs to in the partition of $\alphabet$.
    That is, we define
    \begin{equation}
        \kappa\left(\sym\right) \defeq k \text{\qquad if } \sym \in \alphabet_k,
    \end{equation}
    where $\alphabet = \alphabet_1 \sqcup \ldots \sqcup \alphabet_K$ is the partition defined by the $K$-varied function $\kVariedf$.
    Then, we have that
    \begin{equation}
        \nextStackFun{\stackstr, \sym} = \alphabetDeterminedf\left(\kVariedf_{\kappa\left(\sym\right)}\left(\stackstr\right), \sym\right) = \alphabetDeterminedf\left(\mM^{\kappa\left(\sym\right)} \stackVecFun{\stackstr} + \vv^{\kappa\left(\sym\right)}, \sym\right)
    \end{equation}
    This motivates the following implementation of $\nextStack$ with the RNN recurrence:
    \begin{enumerate}
        \item Divide the hidden state $\hiddState$ into $K$ copies, \emph{one} of those containing the vectorial representation of the actual stack (cf. \cref{eq:vectorial-representation}).
        \item Depending on the input symbol $\sym \in \alphabet$, perform the appropriate affine transformation $\hiddState \mapsto \mM^{\kappa\left(\sym\right)} \hiddState + \vv^{\kappa\left(\sym\right)}$.
        \item Apply the $\alphabet$-determined function $\alphabetDeterminedf$.
    \end{enumerate}
    We thus define, for the (single) initial configuration of $\pda$, $\stackstr_0$,
    \begin{equation}
        \initstate = \hiddStateZero \defeq \begin{pmatrix}
            \stackVecFun{\stackstr_0}      \\
            \zero_{\stackBound \logStackN} \\
            \vdots                         \\
            \zero_{\stackBound \logStackN}
        \end{pmatrix} \in \set{0, 1}^{K \stackBound \logStackN},
    \end{equation}
    which encodes the initial configuration.
    In general, we will write
    \begin{equation}
        \hiddState = \begin{pmatrix}
            \hiddState^1 \\
            \vdots       \\
            \hiddState^K
        \end{pmatrix} \in \set{0, 1}^{K \stackBound \logStackN}.
    \end{equation}
    We will say that the hidden state $\hiddState$ satisfies the \defn{single-copy invariance} if at most one of the components $\hiddState^1, \ldots, \hiddState^K$ is non-zero.
    This completes step 1.

    To implement steps 2 and 3 we proceed as follows.
    To perform the $K$ different stack-affine functions, we define the parameters
    \begin{subequations}
        \begin{alignat}{2}
            \recMtx & \defeq \begin{pmatrix}
                                 \mM^1  & \cdots & \mM^1  \\
                                 \vdots & \ddots & \vdots \\
                                 \mM^K  & \cdots & \mM^K
                             \end{pmatrix} &  & \in \R^{K \stackBound \logStackN \times K \stackBound \logStackN} \\
            \bias   & \defeq \begin{pmatrix}
                                 \vv^1  \\
                                 \vdots \\
                                 \vv^K
                             \end{pmatrix}      &  & \in \R^{K \stackBound \logStackN}.
        \end{alignat}
    \end{subequations}
    We also define the input matrix $\inMtx$ as
    \begin{equation}
        \inMtx \defeq \begin{pmatrix}
            \inEmbeddingFun{\sym_1} & \cdots & \inEmbedding{(\sym_\nsymbols)}
        \end{pmatrix} \in \R^{K \stackBound \logStackN \times \nsymbols} \\
    \end{equation}
    where, for $\sym \in \alphabet$, we write
    \begin{equation}
        \inEmbeddingFun{\sym} = \begin{pmatrix}
            \inEmbeddingFun{\sym}^1 \\
            \vdots                  \\
            \inEmbeddingFun{\sym}^{K}
        \end{pmatrix} \in \R^{K \stackBound \logStackN},
    \end{equation}
    and define
    \begin{equation}
        \inEmbeddingFun{\sym}^k \defeq \begin{cases}
            \vz\left(\sym\right)           & \ifcondition k = \kappa\left(\sym\right) \\
            -\one_{\stackBound \logStackN} & \otherwisecondition
        \end{cases},
    \end{equation}
    where finally
    \begin{equation}
        \vz\left(\sym\right)_\idxj = \begin{cases}
            \zero_{\logStackN}                                    & \ifcondition \idxj \in \sJ^\sym_1 \\
            2 \, \binEncFun{s\left(\sym\right)} - \one_\logStackN & \ifcondition \idxj \in \sJ^\sym_2 \\
            -\one_{\logStackN}                                    & \ifcondition \idxj \in \sJ^\sym_3 \\
        \end{cases},
    \end{equation}
    for $\idxj \in \NTo{\stackBound}$.
    Here, $\one_\logStackN$ is the $\logStackN$-dimensional vector of ones.

    We now show that the parameters defined above simulate the stack update function $\nextStack$ correctly.
    Let $\str \in \kleene{\alphabet}$, $\stackstr \defeq \strToStackFun{\str}$, and $\sym \in \alphabet$.
    Furthermore, assume (by an inductive hypothesis) that $\hiddState$ satisfies the single-copy invariance and that the non-zero component of $\hiddState$ contain $\stackVecFun{\stackstr}$.
    We want to show that $\hiddState' \defeq \heaviside\left(\recMtx \hiddState + \inMtx \onehot{\sym} + \bias\right)$
    \begin{enumerate*}[label=\textit{(\arabic*)}]
        \item satisfies the single-copy invariance, and
        \item that the non-zero copy in $\hiddState'$ contains exactly the encoding $\stackVecFun{\nextStackFun{\stackstr, \sym}}$.
    \end{enumerate*}
    Because of the single-copy invariance of $\hiddState$ and the definition of $\recMtx$, we see that
    \begin{equation}
        \recMtx \hiddState + \vb = \begin{pmatrix}
            \mM^1 \hiddState + \vv^1 \\
            \vdots                   \\
            \mM^K \hiddState + \vv^K
        \end{pmatrix}.
    \end{equation}
    We now note that, for $\kVariedf_k$ to be a valid stack-affine function, it has to hold that $\mM^k \hiddState + \vv^k \in \set{0, 1}^{\stackBound \logStackN}$.
    Now, let $k \in \NTo{K}$.
    We distinguish two cases:
    \begin{itemize}
        \item $k = \kappa\left(\sym\right)$. Then
              \begin{equation}
                  \inMtx \onehot{\sym} = \inEmbeddingFun{\sym}^{\kappa\left(\sym\right)} = \vz\left(\sym\right).
              \end{equation}
              This results in the entries
              \begin{subequations}
                  \begin{align}
                      \heaviside\left(\mM^{\kappa\left(\sym\right)} \hiddState + \inEmbeddingFun{\sym}^{\kappa\left(\sym\right)} + \vv^{\kappa\left(\sym\right)}\right)_\idxj
                       & = \heaviside\left(\mM^{\kappa\left(\sym\right)} \hiddState + \vz\left(\sym\right) + \vv^{\kappa\left(\sym\right)}\right)_\idxj \\
                       & = \heaviside\left(\left(\mM^{\kappa\left(\sym\right)} \hiddState + \vv^{\kappa\left(\sym\right)}\right)_\idxj + \vz\left(\sym\right)_\idxj\right).
                  \end{align}
              \end{subequations}
              We further consider three cases
              \begin{itemize}
                  \item $\idxj \in \sJ^\sym_1$.
                        Then
                        \begin{subequations}
                            \begin{align}
                                \heaviside\left(\left(\mM^{\kappa\left(\sym\right)} \hiddState + \vv^{\kappa\left(\sym\right)}\right)_\idxj + \vz\left(\sym\right)_\idxj\right)
                                 & = \heaviside\left(\left(\mM^{\kappa\left(\sym\right)} \hiddState + \vv^{\kappa\left(\sym\right)}\right)_\idxj + \zero_\logStackN\right) \\
                                 & = \left(\mM^{\kappa\left(\sym\right)} \hiddState + \vv^{\kappa\left(\sym\right)}\right)_\idxj \label{eq:proof-1}
                            \end{align}
                        \end{subequations}
                  \item $\idxj \in \sJ^\sym_2$.
                        Then
                        \begin{subequations}
                            \begin{align}
                                \heaviside\left(\left(\mM^{\kappa\left(\sym\right)} \hiddState + \vv^{\kappa\left(\sym\right)}\right)_\idxj + \vz\left(\sym\right)_\idxj\right)
                                 & = \heaviside\left(\left(\mM^{\kappa\left(\sym\right)} \hiddState + \vv^{\kappa\left(\sym\right)}\right)_\idxj + 2 \, \binEncFun{s\left(\sym\right)} - \one_\logStackN\right) \\
                                 & = \binEncFun{s\left(\sym\right)} \label{eq:proof-2}
                            \end{align}
                        \end{subequations}
                        This follows from the fact that $2 \, \binEncFun{s\left(\sym\right)} - \one_\logStackN$ contains the value $1$ wherever $\binEncFun{s\left(\sym\right)}$ is $1$ and the value $-1$ elsewhere, masking out the entries in the vector that are not active in $\binEncFun{s\left(\sym\right)}$.
                  \item $\idxj \in \sJ^\sym_3$.
                        Then
                        \begin{subequations}
                            \begin{align}
                                \heaviside\left(\left(\mM^{\kappa\left(\sym\right)} \hiddState + \vv^{\kappa\left(\sym\right)}\right)_\idxj + \vz\left(\sym\right)_\idxj\right)
                                 & = \heaviside\left(\left(\mM^{\kappa\left(\sym\right)} \hiddState + \vv^{\kappa\left(\sym\right)}\right)_\idxj - \one_\logStackN\right) \\
                                 & = \zero_\logStackN \label{eq:proof-3}
                            \end{align}
                        \end{subequations}
              \end{itemize}
        \item $k \neq \kappa\left(\sym\right)$. Then
              \begin{equation}
                  \inMtx \onehot{\sym} = \inEmbeddingFun{\sym}^k = -\one_{\stackBound \logStackN},
              \end{equation}
              resulting in
              \begin{equation}
                  \mM^k \hiddState + \inEmbeddingFun{\sym}^k + \vv^k = \mM^k \hiddState - \one_{\stackBound \logStackN} + \vv^k,
              \end{equation}
              whose entries are $\leq 0$.
              This directly implies that
              \begin{equation}  \label{eq:proof-4}
                  \heaviside\left(\mM^k \hiddState + \inEmbeddingFun{\sym}^k + \vv^k\right) = \zero_{\stackBound \logStackN},
              \end{equation}
              \emph{masking} the $k$\textsuperscript{th} component of $\hiddState'$.
    \end{itemize}
    Examining \cref{eq:proof-1,eq:proof-2,eq:proof-3} reveals that these results cover the three conditions of $\alphabet$-determined function $\alphabetDeterminedf$.
    Moreover \cref{eq:proof-4} shows that the update rule preserves the single-copy invariance: All but the $\kappa\left(\sym\right)$\textsuperscript{th} component of the hidden state are masked to $0$.
    Summarizing, this shows that the RNN parametrized with the parameters $\recMtx, \inMtx$, and $\bias$ correctly implements the stack update function $\nextStack$.

    To extend this to the probabilistic setting, we use the assumption that $\pda$ is representation-compatible.
    By definition, $\pda$ then defines next-symbol probabilities $\pLM\left(\eossym\mid \str\right)$ where
    \begin{equation}
        \log \pLM\left(\eossym \mid \strToStackFun{\str} \right) = \softmaxfunc{\outMtx' \stackVecFun{\strToStackFun{\str}} + \outBias'}{\eossym}
    \end{equation}
    for some matrix $\outMtx' \in \R^{\eosnsymbols \times \stackBound \logStackN}$ and $\outBias' \in \R^{\eosnsymbols}$.
    The first part of the proof shows that $\hiddState\left(\str\right)$ contains exactly one copy of $\stackVecFun{\strToStackFun{\str}}$.
    We use that fact and define
    \begin{subequations}
        \begin{alignat}{2}
            \outMtx  & \defeq \begin{pmatrix}
                                  \outMtx' & \cdots & \outMtx'
                              \end{pmatrix} &  & \in \R^{\eosnsymbols \times K \stackBound \logStackN} \\
            \outBias & \defeq \outBias'              &  & \in \R^{\eosnsymbols},
        \end{alignat}
    \end{subequations}
    which will result in the softmax-normalized RNN computing identical next-symbol probabilities to those computed by $\pda$.
    This means that $\rnn$ and $\pda$ are weakly equivalent.

    \anej{
        One of the biggest conceptual differences between a \bpdaAcr and an RNN is the fact that a \bpdaAcr can freely choose how to modify the stack based on the input symbol $\sym$.
        In contrast, a crucial property of the RNN update rule (cf. \cref{eq:elman-update-rule}) is that it cannot choose between different update operations based on the current input symbol.
        Thus, for a \bpdaAcr to be efficiently implementable by an RNN, it must implement an update mechanism that is, to a large extent, \emph{invariant} to $\sym$.
        This is captured by the notion of a $K$-varied function, which limits the effect of $\sym$ to choosing among $K$ different functions that only depend on $\stackstr$.
    }
\end{proof}

\subsection{On the Impossibility of Efficiently Representing All \bpdaAcr LMs} \label{sec:impossibilities}

\impossibilityThm*
\begin{proof}
    The reason behind this is intuitive---the $\nsymbols^\stackBound$ next-symbol probability distributions defined by a \bpdaAcr LM can be completely arbitrary and might not lend themselves to a compact parametrization with matrix multiplication (cf. \cref{def:representation-compatible}).
    This is why the proof of \cref{thm:efficiently-representable-bpdas} relies heavily on specific families of \bpdaAcr LMs that are particularly well-suited for efficient representation by RNN LMs through parameter sharing.
    More formally, this is a special case of the \defn{softmax bottleneck} \citep{yang2018breaking,chang-mccallum-2022-softmax,borenstein-etal-2024-what}: The notion that the representations $\hiddState\left(\str\right) \in \R^{\hiddDim}$ defining an LM whose conditional logits span a $d$-dimensional subspace of $\R^{\eosnsymbols}$ must be of size $\hiddDim \geq d$.
    Because \bpdaAcr{}s can in general define full-rank distributions whose logits span $\R^\eosnsymbols$, there exist \bpdaAcr{}s for which it has to hold that $\hiddDim \geq \eosnsymbols$.
    Any such \bpdaAcr is not efficiently representable; $\eosnsymbols \leq \hiddDim \leq C \stackBound \log_2\nsymbols$ would require $C \geq \frac{\eosnsymbols}{\stackBound \log_2\nsymbols}$, which is not constant in $\nsymbols$.
\end{proof}

\paragraph{Connection to the result by \citet{hewitt-etal-2020-rnns}.}
The impossibility result from \cref{thm:impossibility} of course includes distributions over bounded Dyck languages as well.
That is, a general distribution over a bounded Dyck language may not be efficiently represented by an RNN LM.
This does not contradict the results from \citet{hewitt-etal-2020-rnns}---the LMs considered by \citet{hewitt-etal-2020-rnns} form a particular family of LMs that \emph{are} efficiently representable by Elman RNN LMs.
Intuitively, this is because of two reasons:
\begin{enumerate}
    \item In their construction, the probability of the next symbol only depends on the \emph{top} of the stack.
          As such, many stack configurations (and thus RNN hidden states) result in the same next-symbol probability distribution, allowing for more efficient encoding.
    \item \citet{hewitt-etal-2020-rnns} are only interested in recognizing binary languages, which means that they only consider LMs that assign \emph{sufficiently large} probabilities to the correct continuations of the input string.
          Exact probabilities under the language model are not important.
\end{enumerate}

\end{document}